\documentclass{article} 
\usepackage{iclr2027_conference,times}

\usepackage{url}

\usepackage[utf8]{inputenc} 
\usepackage[T1]{fontenc}    
\usepackage[backref=page]{hyperref}       
\usepackage{url}            
\usepackage{booktabs}       
\usepackage{threeparttable}
\usepackage{amsfonts}       
\usepackage{nicefrac}       
\usepackage{microtype}      
\usepackage{xcolor}         

\usepackage[utf8]{inputenc} 
\usepackage[T1]{fontenc}    
\usepackage{hyperref}       
\usepackage{url}            
\usepackage{booktabs}       
\usepackage{enumitem}
\usepackage{mathtools}
\usepackage{todonotes}
\usepackage{amssymb,fge}
\usepackage{thm-restate}
\usepackage{wrapfig}
\usepackage{bbm}
\usepackage{mathrsfs}
\usepackage{soul}
\usepackage{array}
\usepackage{multirow}
\usepackage{algorithm}
\usepackage{tikz}
\usepackage{subcaption}
\usepackage{wrapfig}
\usetikzlibrary{positioning,arrows.meta,shapes.geometric}
\usepackage{nicefrac}       
\usepackage{microtype}      
\usepackage{xcolor}         
\usepackage{colortbl}
\usepackage{enumitem}
\usepackage{xspace}
\usepackage{lineno}

\PassOptionsToPackage{final,pagebackref=true}{hyperref}
\usepackage{hyperref} 
\usepackage{wasysym}
\usepackage{acronym}
\usepackage{xfrac}

\usepackage{amsmath, amsthm, mathtools}
\newtheorem{assumption}{Assumption}
\usepackage[nameinlink]{cleveref}
\usepackage[normalem]{ulem}
\usepackage{changepage}
\usepackage{float}
\usepackage{makecell}
\usepackage{pifont}
\usepackage{tikz}
\usepackage{empheq}
\usepackage{booktabs}

\usepackage{svg}
\usepackage{centernot}

\Crefname{assumption}{Assumption}{Assumptions}

\newcommand{\cmark}{\ding{51}} 
\newcommand{\xmark}{\ding{55}} 
\definecolor{sectiongray}{gray}{0.90}
\definecolor{refgray}{gray}{0.55}
\usepackage{algorithm}
\usepackage{algorithmic}

\hypersetup{
	colorlinks=true,       
	linkcolor=blue,        
	citecolor=violet!80!black,        
	filecolor=magenta,     
	urlcolor=blue
}

\usepackage[most]{tcolorbox}
\newtcolorbox{wideeqbox}{
  width=\linewidth,
  colframe=blue!40!black!10,
  colback=green!05,
  boxrule=\fboxrule,
  arc=0pt,
  left=6pt,
  right=6pt,
  top=0pt,
  bottom=0pt,
  before upper={
    \setlength{\abovedisplayskip}{2pt}%
    \setlength{\belowdisplayskip}{2pt}%
    \setlength{\abovedisplayshortskip}{2pt}%
    \setlength{\belowdisplayshortskip}{2pt}%
  },
}

\graphicspath{{figures/}}

\usepackage{amsmath, amsthm, amsfonts, bm}
\usepackage{mathtools}
\usepackage{thmtools}
\usepackage{thm-restate}
\usepackage{xcolor}
\usepackage{xspace}
\usepackage[framemethod=TikZ]{mdframed}

\newtheorem{theorem}{Theorem}
\newtheorem{proposition}{Proposition}

\def\eqref#1{equation~\ref{#1}}

\def\1{\bm{1}}

\DeclareMathAlphabet{\mathsfit}{\encodingdefault}{\sfdefault}{m}{sl}
\SetMathAlphabet{\mathsfit}{bold}{\encodingdefault}{\sfdefault}{bx}{n}

\def\gN{{\mathcal{N}}}

\newcommand{\pdata}{p_{\rm{data}}}

\newcommand{\E}{\mathbb{E}}

\newcommand{\xhdr}[1]{{\noindent\bfseries #1}.}
\newcommand{\cut}[1]{}

\newcommand{\diff}{\mathrm{d}}

\usepackage[framemethod=TikZ]{mdframed}
\mdfdefinestyle{MyFrame}{%
    linecolor=black,
    outerlinewidth=.3pt,
    roundcorner=5pt,
    innertopmargin=1pt, 
    innerbottommargin=1pt, 
    innerrightmargin=1pt,
    innerleftmargin=1pt,
    backgroundcolor=black!0!white}

\mdfdefinestyle{MyFrame2}{%
    linecolor=white,
    outerlinewidth=1pt,
    roundcorner=1pt,
    innertopmargin=3pt,
    innerbottommargin=2pt,
    innerrightmargin=7pt,
    innerleftmargin=7pt,
    backgroundcolor=black!3!white}

\mdfdefinestyle{MyFrameEq}{%
    linecolor=white,
    outerlinewidth=0pt,
    roundcorner=0pt,
    innertopmargin=0pt,
    innerbottommargin=0pt,
    innerrightmargin=7pt,
    innerleftmargin=7pt,
    backgroundcolor=black!3!white}

\newcommand{\pnot}{p_0}
\newcommand{\Phistar}{\Phi^{\star}}
\newcommand{\vstar}{v^{\star}}
\newcommand{\ustar}{u^{\star}}

\newcommand{\meanflow}{\text{MeanFlow}\xspace}
\newcommand{\stopgradient}{\text{stop gradient}\xspace}

\setcitestyle{round}

\title{On the Stability and Convergence of Flow Map Training}

\author{%
  Ishan Nath\textsuperscript{1}\thanks{Correspondence: \texttt{ishan.nath@stats.ox.ac.uk}},\;
  Tom Stanic\textsuperscript{2,3},\;
  Avishek Joey Bose\textsuperscript{3,4},
  \\
  \textbf{
  \!Gauthier Gidel\textsuperscript{2,3,5},\;
  Quentin Bertrand\textsuperscript{3,6}}
  \\
  \!\textsuperscript{1}Inria, ENS de Lyon, CNRS, Université Claude Bernard Lyon 1, \\ LIP UMR 5668, 69342 Lyon Cedex 07, France \\
  \!\textsuperscript{2}Université de Montréal,
  \textsuperscript{3}Mila - Quebec AI Institute,
  \\
  \!\textsuperscript{4}Imperial College London,
  \textsuperscript{5}CIFAR AI Chair \\
  \!\textsuperscript{6}Université Jean Monnet Saint-Étienne, CNRS, Institut d’Optique Graduate School, \\ Inria, Laboratoire Hubert Curien UMR 5516, F-42023 Saint-Étienne, France \\
}

\iclrfinalcopy 
\begin{document}

\maketitle
\lhead{Preprint. Work in progress.}

\begin{abstract}

Flow maps are a recent method of probabilistic generation that amortize diffusion trajectories into one-step samplers, unlocking efficient generation. While progress has been made in improving their empirical performance, there is still a lack of understanding behind the mechanisms underlying their success. 
Specifically, many flow map learning methods employ stop gradient operators as a crucial part of their training target. 
Adaptive weighting also drastically improves the generation capabilities of flow maps. Although many interpretations have been proposed to support the inclusion of these heuristics, none have analyzed the impact of these assumptions on the training dynamics of flow maps themselves. In this work, we address this issue by providing grounded theory explaining why such tricks are required.
Analyzing the functional gradient flow induced by the MeanFlow and Lagrangian losses, we show their convergence to the true flow map, settling the validity of these objectives. For finite updates, we highlight a severe roadblock to the usage of these losses in practice: the possible unboundedness of the update. 
Adaptive normalization not only provides bounded updates, but, combined with a specific weighting, additionally provides convergence guarantees towards a neighborhood of the solution.
On synthetic data, we show that our functional gradient flow analysis exactly describes the true training dynamics for flow maps.
Experiments on standard image datasets demonstrate that a proper reweighting scheme is essential for strong empirical performance, which is only achieved for a specific range of reweighting strengths.

\end{abstract}

\input{intro_figure.tex}

\section{Introduction}
\label{sec:intro}

\looseness=-1
Modern deep generative models, such as diffusion \citep{sohlDickstein2015diffusion,hoDenoisingDiffusionProbabilistic2020,songScoreBasedGenerativeModeling2021} and flow matching models \citep{lipmanFlowMatchingGenerative2023,albergoBuildingNormalizingFlows2023,liuFlowStraightFast2023}, can now generate realistic data across a wide range of modalities.
Diffusion and flow matching methods are able to generate samples that are indistinguishable from human-generated content, including images \citep{StablediffXL,MidJourney,labs2025flux}, audio \citep{Borsos2023}, and video \citep{Villegas2022,videoworldsimulators2024}.
Diffusion and flow matching have also recently shown great performance on text, either with discrete or continuous approaches \citep{savinov2022step,Gong2022diffusiontext,sahoo2024simple,leeFlowMapLanguage2026,team2026diffusiongemma}.

Once trained, sampling for flow matching models is typically done by numerically solving an ordinary differential equation, \emph{i.e.}, integrating the learned velocity over hundreds of steps, hence requiring hundreds of neural network evaluations.
This high computational cost at inference has motivated the development of few-step generative models, either by straightening the flow \citep{liuFlowStraightFast2023,esser2024scaling}, or via distribution matching approaches \citep{yinOnestepDiffusionDistribution2024,zhouInductiveMomentMatching2025a}.
Current promising one-step generative modeling approaches rely on distilling the inference trajectories and include techniques such as consistency models \citep{song2023consistency}, which learn to directly predict the trajectory endpoint of the flow matching ordinary differential equation from any point along the trajectory.
In the same spirit, flow maps \citep{gengMeanFlowsOnestep2025,gengImprovedMeanFlows2025,boffiFlowMapMatching2025,luOnestepLatentfreeImage2026} learn the solution of the ordinary differential equation between any two times.

\looseness=-1
A variety of approaches for training flow maps have been proposed \citep{gengMeanFlowsOnestep2025,gengImprovedMeanFlows2025,boffiFlowMapMatching2025,zhouTerminalVelocityMatching2026,goldsteinFlowMapLearning2026,fransOneStepDiffusion2025,zhangAlphaFlowUnderstandingImproving2025}, due to the abundance of possible training objectives.
To stabilize training, several tricks are employed, including \stopgradient operators, self-distillation from specialized velocity heads, loss reweighting, gradient normalization, and sampling time pairs from hand-picked distributions \citep{gengMeanFlowsOnestep2025, gengImprovedMeanFlows2025, sabourAlignYourFlow2025, luOnestepLatentfreeImage2026}.
\emph{There is little understanding behind why certain losses work in practice}, and whether the tricks used for training are theoretically sound. In particular, the inclusion of \stopgradient operators has been found to substantially improve flow map training \citep{gengMeanFlowsOnestep2025}; however, it alters gradient directions and has yet to be shown to recover the true flow map.

\xhdr{Contributions} In this paper, we aim to understand the impact of the \emph{\stopgradient} operator and of \emph{adaptive normalization} on the learning dynamics of flow maps, and to show how they provide tangible benefits to the convergence and stability of flow map training.
We show the following:
\begin{enumerate}[topsep=0pt, partopsep=0pt, itemsep=0pt, parsep=0pt, leftmargin=*]
    \item \looseness=-1 \xhdr{Functional convergence of flow maps} We show that, in a specific functional setting, the gradient flow of the \meanflow and Lagrangian objectives converges to the desired flow map (\Cref{thm_meanflow}).
    Importantly, the convergence results rely on the careful \stopgradient placement in the \meanflow and Lagrangian losses.
    \item \looseness=-1 \xhdr{Adaptive normalization} We show that a fundamental limitation of gradient descent is the unboundedness of the update (\Cref{prop_counter_example}), which may amplify arbitrarily small errors.
    We demonstrate that, crucially, adaptive normalization of the loss, an empirically effective stabilization tool \citep{gengMeanFlowsOnestep2025}, counters this blow-up by bounding the update, and provides convergence guarantees under a specific weighting  (\Cref{prop:normalized_convergence}).
    \item \looseness=-1 \xhdr{Empirical validation}
    In synthetic experiments where the true flow map can be calculated, we show that at intermediate training times, the learned maps obtained through training under the \meanflow and Lagrangian objectives equal those predicted by our theory.
    Further experiments on CIFAR-10 and FFHQ 64$\times$64 show that reweighting using adaptive normalization with weights that guarantee stable updates results in learned flow maps with lower FID.
\end{enumerate}
%
\xhdr{Organization}
\looseness=-1
\Cref{sec_reminders} provides reminders on flow matching and flow map training objectives.
\Cref{sec_analysis} includes our main results: we analyze the gradient flow and gradient descent dynamics of flow map training from a functional
perspective.
\Cref{sec_related_work} provides a summary of related work, and \Cref{sec_experiments} provides experiments on synthetic and image datasets that confirm the theory developed in \Cref{sec_analysis}.





\section{Background and preliminaries}
\label{sec_reminders}

\looseness=-1
\xhdr{Flow matching}
Let $\pnot$ be the source distribution, typically a Gaussian $\gN(0,I)$, and $\pdata$ the target data distribution.
The goal of flow matching~\citep{albergoBuildingNormalizingFlows2023,liuFlowStraightFast2023,lipmanFlowMatchingGenerative2023} is to find a velocity field $v: \mathbb{R}^d \times [0, 1] \to \mathbb{R}^d$, such that, if we solve the following ordinary differential equation on $[0, 1]$:
\begin{equation}\label{eq:ode}
    \begin{cases}
      \dot x(t) = v(x(t), t), \\
      x(0) = x_0 \in \mathbb{R}^d,
    \end{cases}
\end{equation}
\looseness=-1
then $x(1) \sim \pdata$ for $x_0\sim p_0$. We denote by $p_s$ the distribution of $x(s)$ if $x_0 \sim p_0$; in particular, $p_1 = \pdata$.
If we know the velocity field $v$, new samples can then be generated by sampling $x_0$ from $p_0$, solving the ordinary differential equation, and using $x(1)$ as the generated point. Conditional flow matching learns a velocity field by minimizing the following loss:
\begin{align}\label{eq_cfm}
    \mathcal{L}^{\mathrm{CFM}}(\theta)
    =
    \mathbb E_{
        x_0 \sim \pnot,
         x_1 \sim \pdata,
        t \sim \mathrm{U}(0, 1)
         }\|v_\theta((1-t)x_0 + t x_1, t) - (x_1 - x_0) \|^2.
\end{align}


\looseness=-1
Numerical integration schemes typically rely on a discretization of the time interval. Achieving accurate samples generally requires a fine discretization, often corresponding to hundreds of function evaluations of the neural network. This can lead to prohibitive computational costs, particularly for high-dimensional data such as high-resolution images.
Several approaches have therefore been proposed to reduce the computational cost of sampling \citep{liuFlowStraightFast2023,pooladian2023multisample,tong2024improving}. Below, we briefly review flow maps as an alternative approach to improving sampling efficiency.

\looseness=-1
\xhdr{Flow maps}
Instead of learning the velocity field $v$, one-step techniques aim to directly learn the \emph{flow}, \emph{i.e.}, the solution of the ordinary differential equation~(\Cref{eq:ode}).
Flow map techniques learn the flow $\Phi(x_s, s, t)$, which maps any starting point $x_s$ at time $s$ to its corresponding solution at a later time $t > s$.
More precisely, for a given velocity $v$, the flow map $\Phi^\star(x_s, s, t)$ is defined as the solution at time $t$ of the following ordinary differential equation, initialized at $x_s$ at time $s$:
\begin{wideeqbox}
    \begin{align}
  \text{$\Phistar(x_s, s, t)$ is the solution of the following ODE}
  \quad
  \quad
    \begin{cases}
      \dot x(t) = v(x(t), t) \\
      x(s) = x_s \in \mathbb{R}^d
    \end{cases}
    \text{at time $t$.}
    \label{eq:flow_map_def}
    \end{align}
\end{wideeqbox}
\looseness=-1
The velocity $v$ can be either a pre-trained network $v_{\theta}$ or the exact closed-form $\vstar$ of~\Cref{eq_cfm} \citep{bertrand2026closed}. Once such a flow map is learned, one can sample from $\pdata$ by sampling $x_0 \sim p_0$ and then computing $\Phi(x_0, 0, 1)$, requiring only a single neural network evaluation.

\looseness=-1
\xhdr{Learning flow maps}
One way to learn flow maps is to find a fixed-point equation that is satisfied by the optimal flow map defined in~\Cref{eq:flow_map_def}.
This can be done by differentiating the definition of the flow map~(\Cref{eq:flow_map_def}) with respect to $s$: this yields the \emph{Eulerian} identity  (\citealt{boffiFlowMapMatching2025}, Prop. 3.6)
  \begin{align} \label{eq:eulerian_id}
    \partial_s \Phistar(x, s, t)
    + \nabla_x \Phistar(x, s, t)
    v(x, s) = 0
    \enspace.
  \end{align}
\looseness=-1
Associated to each flow map $\Phi$ is the average velocity $u$ satisfying $\Phi(x, s, t) = x + (t - s) u(x, s, t)$. The parametrizations are equivalent in theory, but in practice the average velocity parametrization ensures $\Phi(x, s, s) = x$ while learning. 
The popular \emph{\meanflow} objective \citep{gengMeanFlowsOnestep2025} can be seen as a stop gradient version of the average velocity parametrization substituted into the Eulerian identity (\Cref{eq:eulerian_id}):
\begin{wideeqbox}
\begin{align}
      \mathcal{L}^{\mathrm{MF}}(u)
   & =
  \frac 12
  \E_{x, s, t}
  \left\| u(x, s, t) - \mathrm{sg} (
  u_{\mathrm{tgt}}^{\mathrm{MF}}
  (x, s, t))
  \right\|^2
  \enspace,
  \nonumber
  \\
  \label{eq:mf_loss}
  u_{\mathrm{tgt}}^{\mathrm{MF}}(x, s, t)
  & =  v(x, s) + (t - s) \left( \nabla_x u(x, s, t) v(x, s) + \frac{\partial}{\partial s} u(x, s, t) \right)
  \quad \!\!\!
  \text{(\meanflow)}
  \enspace
  .
\end{align}
\end{wideeqbox}
\looseness=-1
Here, $\mathrm{sg}$ denotes the \emph{\stopgradient} operator: if the gradient of the loss $\mathcal L^{\mathrm{MF}}$ is taken with respect to the average velocity $u$ or parameters that control $u$, the terms within the $\mathrm{sg}$ function have their gradient artificially set to 0.

Differentiating the definition of the flow map~(\Cref{eq:flow_map_def}) with respect to $t$ instead yields the \emph{Lagrangian} identity  (\citealt{boffiFlowMapMatching2025}, Prop. 3.5)
  \begin{align}\label{eq:lagrangian_id}
    \partial_t \Phistar(x, s, t) = v(\Phistar(x, s, t), t)
    \enspace.
  \end{align}
The same parametrization $\Phi(x, s, t) = x + (t - s) u(x, s, t)$ with the Lagrangian identity leads to the \emph{Lagrangian} objective. We show in \Cref{sub:ayf_equiv} that this recovers the AYF-LMD objective from \citet{sabourAlignYourFlow2025}, up to reweighting:
\begin{wideeqbox}
    \begin{align}
  \mathcal{L}^{\mathrm{L}}(u)
   & =
  \frac 12
  \E_{x, s, t}
  \left\| u(x, s, t) - \mathrm{sg} (
  u_{\mathrm{tgt}}^{\mathrm{L}}
  (x, s, t))
  \right\|^2
  \enspace,
  \nonumber
  \\
 u_{\mathrm{tgt}}^{\mathrm{L}}
 (x,s,t)
 &=
  v(x + (t - s) u(x, s, t), t) - (t - s) \frac{\partial u}{\partial t} (x, s, t)
  \quad \!\!\!
  \text{(Lagrangian)}
  \enspace
  .
  \label{eq:lagrangian_loss}
    \end{align}
\end{wideeqbox}

%






\looseness=-1
\xhdr{Comments on~\Cref{eq:mf_loss,eq:lagrangian_loss}}
Such \stopgradient-based losses can be motivated in several ways.
Computationally, since all partial derivatives of the network are contained within the \stopgradient, this removes the need for double backpropagation.
Theoretically, a minimizer of $\mathcal L$ is necessarily a flow map, since $\mathcal L \geq 0$ with equality if and only if the Eulerian or Lagrangian identity holds.
However, although a zero training loss implies being a flow map, \emph{the \stopgradient operator might prevent optimization from necessarily decreasing the loss $\mathcal L$}: in practice, the \meanflow loss has been shown not to converge to zero over iterations~\citep{gengImprovedMeanFlows2025}.
Furthermore, \cite{goldsteinFlowMapLearning2026} theoretically showed that the model update induced by the \meanflow loss cannot be the gradient of any scalar loss.

In conjunction with the concurrent work of \citet{shen2026ilearnedstopworrying}, thoroughly discussed in \Cref{sec_related_work}, this work provides the first  (functional) convergence result of the flow map training dynamics for the \meanflow and Lagrangian objectives.

\section{Theoretical analysis of functional flow maps}
\label{sec_analysis}

\looseness=-1
We now present our main theoretical results and rigorously analyze the training dynamics of the \meanflow and the Lagrangian methods.
The main tools required to perform functional gradient descent are introduced in \Cref{sub_setting}.
\Cref{sub:mf_conv} shows our main convergence results: the \meanflow and Lagrangian \emph{gradient flow} training dynamics converge towards the true flow map. This uncovers the non-obvious result: \emph{regressing against the stop gradient target not only yields practical benefits, but provably recovers the true flow map}.
In \Cref{sub:disc_failure}, we analyze how, under \emph{gradient descent}, \meanflow and Lagrangian training can fail to recover the true flow map due to the divergence of iterates,
and how a specific range of adaptive normalization strengths prevents such a failure. 
\subsection{Setting}
\label{sub_setting}
\xhdr{Specific inner product choice}
We consider losses of the form of~\Cref{eq:mf_loss} and~\Cref{eq:lagrangian_loss}. The choice of how to sample $x, s$ and $t$ induces a density $\rho$ on $\mathbb R^{d+2}$.
For example, choosing $(s, t)$ uniformly from the triangle $\Delta = \{0 \leq s \leq t \leq 1\}$ and sampling $x$ from the interpolant distribution at time $s$ yields $\rho(x, s, t) = 2p_s(x)$. With this notation, we can write the losses as
\begin{align}
    \label{eq:general_loss}
    \mathcal L(u) = \int_{x,s, t} \frac 12 \left\| u(x, s, t) - \mathrm{sg}(u_{\mathrm{tgt}}(x, s, t))\right\|^2 \rho(x, s, t) \diff x \diff s \diff t
    \enspace,
\end{align}
with $u_{\mathrm{tgt}}$ either the MeanFlow target (\Cref{eq:mf_loss}) or the Lagrangian target (\Cref{eq:lagrangian_loss}).
As the input space of $\mathcal{L}$ is infinite-dimensional, computing the gradient of $\mathcal{L}$ requires the choice of an inner product on the functional space.
We use the notion of Fréchet derivatives (see \Cref{app_sub_frechet}) with the $L^2(\mathbb R^d \times \Delta)$ inner product. 
%
%
Under this choice of inner product, one can compute the gradient of the loss in~\Cref{eq:general_loss} to be $\nabla_u \mathcal L(u) = \rho(u - u_{\mathrm{tgt}})$.

\xhdr{Exact gradient (no stochasticity)}
\looseness=-1
For most machine learning applications, computing the true deterministic gradient is too expensive, and one has to resort to stochastic gradient algorithms.
To simplify our analysis, we disregard the potential effects of stochasticity on the optimization dynamics, and assume access to the true gradients $\nabla_u \mathcal L^{\mathrm{MF}}$ and $\nabla_u \mathcal L^{\mathrm L}$.

\looseness=-1
\xhdr{Gradient flow} 
For a loss $\mathcal L$ and a fixed step size $\eta >0$, the gradient descent updates $(u_k)$ write
\begin{align}
    \tfrac1\eta(u_{k+1} - u_k) = - \nabla_u \mathcal L(u_k) \enspace.
\end{align}
\looseness=-1
Gradient flow is the continuous-time limit of gradient descent as the step size $\eta \to 0$, where instead of taking discrete steps, 
we continuously follow the negative gradient $- \nabla_u \mathcal L$. 
In terms of the \emph{optimization time $\tau$}, the continuous analogue of the number of steps ($\tau = \eta k$), 
the gradient flow dynamics satisfy
\begin{align}
    \label{eq_general_gradient_flow}
    \partial_\tau u_\tau = - \nabla_u \mathcal L(u_\tau) \enspace.
\end{align}
\looseness=-1
The gradient flow setting provides tractable analysis by avoiding finite step size issues, while remaining a relevant model for practical optimization.

\subsection{Convergence results of the gradient flow dynamics}
\label{sub:mf_conv}
We first show that the flow-map training dynamics $\Phi_\tau$, induced by gradient flow (\Cref{eq_general_gradient_flow}) with the \meanflow and Lagrangian objectives, converge to $\Phistar$.
Equipped with our functional inner product, the functional gradients of the objectives in~\Cref{eq:mf_loss,eq:lagrangian_loss} write
\begin{align}
    \nabla_u \mathcal{L}^{\mathrm{MF}}(u)(x, s, t) 
    &=
    \rho(x, s, t) \left( u (x, s, t) - v(x, s) - (t - s) \left( (\nabla_x u) v(x, s) + \frac{\partial u}{\partial s}  \right) \right)
    \enspace,
  \label{eq:mf_loss_grad}
  \\
  \nabla_u \mathcal L^{\mathrm{L}}(u)(x, s, t) 
  &= \rho(x, s, t) \left( u(x, s, t) - v(x + (t - s) u(x, s, t), t) + (t - s) \frac{\partial u}{\partial t} \right)
  \enspace. 
  \label{eq:lagrangian_gradient}
\end{align}
\looseness=-1
We now solve the gradient flow (\Cref{eq_general_gradient_flow}) that arises from these values of $\nabla_u \mathcal L$. 
We equip $u$ with another variable $\tau$, where \emph{$u_\tau$ denotes the trained average velocity at optimization time $\tau$}. 
Our main result \Cref{thm_meanflow} requires 
standard regularity assumptions on the velocity $v$ and the sampling density $\rho$.
\begin{assumption}[Regularity of the velocity $v$ and the sampling density $\rho$]
    \label{ass_regularity_vel}
    We assume that the velocity $v(x, s)$ is locally Lipschitz in both $x$ and $s$.
    We assume that the sampling density $\rho(x, s, t)$ is locally Lipschitz in $x$, $s$ and $t$ for all $(x, s, t) \in \mathbb R^d \times \Delta$, and $\rho(x, s, t) > 0$ everywhere. 
\end{assumption}
We also assume some regularity on the initial average velocity $u_0$ for the gradient flow dynamics.
\begin{assumption}[Regularity of the average velocity initialization $u_0$]
    \label{ass_regularity_mf}
    The average velocity initialization $u_0$ is $C^1$ on the interior of $\mathbb R^d \times \Delta$, and continuous at the boundary.
\end{assumption}

Under these assumptions, we obtain one of our main results: flow map pointwise convergence.
\begin{theorem}
  \label{thm_meanflow}
  Assume the velocity and the sampling density satisfy \Cref{ass_regularity_vel}, and suppose the initial average velocity $u_0$ satisfies \Cref{ass_regularity_mf}. 
  If the average velocity $u_\tau$ satisfies the functional gradient flow \Cref{eq_general_gradient_flow}, then its associated flow map $\Phi_\tau(x, s, t) := x + (t-s) u_\tau(x, s, t)$ has a closed-form solution for the \meanflow and Lagrangian objectives 
    \begin{wideeqbox}
    \begin{align}
      \Phi_\tau^{\mathrm{MF}}(x, s, t)
      &=
      \Phi_0(\Phistar(x, s, t_\tau), t_\tau, t)  
      \enspace,
      &
      \text{ if }  \mathcal L &= \mathcal L^{\mathrm{MF}} 
      \quad \text{ (MeanFlow)} 
      \enspace,
      \label{eq_cf_mf}
      \\
        \Phi_\tau^{\mathrm{L}}(x, s, t) &= \Phi^\star(\Phi_0(x, s, s_\tau), s_\tau, t) 
      \enspace,
        & 
      \text{ if } \mathcal L &= \mathcal L^{\mathrm{L}}
       \quad  \text{ (Lagrangian)}
      \enspace,
      \label{eq_cf_lag}
  \end{align}
  \end{wideeqbox}
  where $\Phi_0 = x + (t - s) u_0$ and $t_\tau$ (resp. $s_\tau$)
  is an increasing (resp. decreasing) function in $\tau$ with $t_0 = s$, $t_\tau \to t$ as $\tau \to \infty$ (resp. $s_0 = t$, $s_\tau \to s$), whose speed depends on $\rho$.
  Hence, $\Phi_\tau$ converges pointwise to the true flow map as $\tau \to \infty$ under $\mathcal L^{\mathrm{MF}}$. If $\Phistar$ is also continuous in $(x, s)$, the same holds for $\mathcal L^{\mathrm{L}}$.
  
\end{theorem}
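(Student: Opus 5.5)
The plan is to rewrite each gradient flow \Cref{eq_general_gradient_flow} as a first-order PDE for the flow map $\Phi_\tau = x + (t-s)u_\tau$, and to solve it by the method of characteristics. Since $t-s$ does not depend on $\tau$, we have $\partial_\tau \Phi_\tau = (t-s)\partial_\tau u_\tau$. The first step is to express $u - u_{\mathrm{tgt}}$ through derivatives of $\Phi$.

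\textbf{MeanFlow case.} From $\partial_s\Phi = -u + (t-s)\partial_s u$ and $\nabla_x\Phi = I + (t-s)\nabla_x u$ we get
\[
u - u^{\mathrm{MF}}_{\mathrm{tgt}} = -\bigl(\partial_s\Phi + \nabla_x\Phi\, v(x,s)\bigr).
\]
With \Cref{eq:mf_loss_grad} this gives the linear transport equation
\[
\partial_\tau\Phi_\tau = \rho\,(t-s)\bigl(\partial_s\Phi_\tau + \nabla_x\Phi_\tau\, v(x,s)\bigr).
\]
Here $t$ is a frozen parameter. The equation says $\Phi_\tau(x,s,t) = \Phi_0(\gamma(\tau),t)$, where the characteristic $\gamma(\sigma) = (X(\sigma),S(\sigma))$ starts at $(x,s)$ and satisfies
\[
\dot S = \rho(X,S,t)(t-S), \qquad \dot X = \rho(X,S,t)(t-S)\,v(X,S).
\]
Reparametrizing by $S$ shows $dX/dS = v(X,S)$, so $X = \Phistar(x,s,S)$. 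Setting $t_\tau := S(\tau)$ then gives \Cref{eq_cf_mf}. In this reduced form $t_\tau$ solves the scalar ODE
\[
\dot t_\tau = \rho\bigl(\Phistar(x,s,t_\tau),t_\tau,t\bigr)(t-t_\tau), \qquad t_0 = s.
\]
The right-hand side is locally Lipschitz by \Cref{ass_regularity_vel}, so the solution is unique. Since $t_\tau \equiv t$ is an equilibrium, uniqueness prevents $t_\tau$ from reaching $t$ in finite time. Positivity of $\rho$ makes $t_\tau$ strictly increasing.

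For the limit: if $t_\tau \to \bar t < t$, then continuity and positivity of $\rho$ on the compact segment give $\dot t_\tau \ge c(t-\bar t) > 0$, which is a contradiction. Hence $t_\tau \to t$. Continuity of $\Phi_0$ up to the boundary (\Cref{ass_regularity_mf}) and $\Phi_0(y,t,t) = y$ then yield $\Phi_\tau \to \Phistar(x,s,t)$.

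\textbf{Lagrangian case.} Similarly, $\partial_t\Phi = u + (t-s)\partial_t u$ gives
\[
u - u^{\mathrm{L}}_{\mathrm{tgt}} = \partial_t\Phi - v(\Phi,t).
\]
Hence
\[
\partial_\tau\Phi_\tau + \rho\,(t-s)\,\partial_t\Phi_\tau = \rho\,(t-s)\,v(\Phi_\tau,t).
\]
This is a semilinear transport equation in $t$, with $(x,s)$ frozen. Along forward characteristics $\dot T = \rho(x,s,T)(T-s)$, the value satisfies $\tfrac{d}{d\sigma}\Phi = \dot T\, v(\Phi,T)$, i.e. it follows the ODE \Cref{eq:flow_map_def} in the variable $T$.

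To evaluate $\Phi_\tau$ at $(x,s,t)$, trace the characteristic backward from $T=t$ over optimization time $\tau$. It lands at some $s_\tau$, which is decreasing in $\tau$ with $s_0 = t$. Transporting the initial value $\Phi_0(x,s,s_\tau)$ forward along the ODE gives \Cref{eq_cf_lag}. The same argument as in the MeanFlow case (the equilibrium $T = s$ is never reached, and positivity of $\rho$ rules out a limit above $s$) shows $s_\tau \to s$. Then
\[
\Phi_0(x,s,s_\tau) \to \Phi_0(x,s,s) = x,
\]
and continuity of $\Phistar$ in its initial point and initial time gives $\Phi_\tau \to \Phistar(x,s,t)$. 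This is exactly why the extra continuity hypothesis appears for $\mathcal L^{\mathrm L}$.

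\textbf{Main obstacle.} I expect the hardest step to be rigor rather than computation. One must check that the characteristic formulas define genuine $C^1$ (or at least weak) solutions of the gradient flow on the interior of $\Delta$, and that they are the unique ones given only the regularity in \Cref{ass_regularity_mf}. The first thing I would try is direct verification: differentiate the closed forms, using differentiability of the characteristic flows in their initial data, which holds under local Lipschitz data. For uniqueness, I would observe that any $C^1$ solution is constant (resp. follows the ODE) along characteristics, since its derivative along the curve vanishes. That pins the solution down pointwise, provided characteristics stay in the interior, which the non-attainment of $t$ (resp. $s$) guarantees.
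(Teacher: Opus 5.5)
Your proposal is correct and follows essentially the paper's route: the method of characteristics for the linear MeanFlow PDE and the semilinear Lagrangian PDE, with $t_\tau$ and $s_\tau$ obtained by tracing characteristics and their limits following from $\rho$ being bounded above and below on the compact trajectory segment. The only cosmetic difference is in the MeanFlow case, where you solve the homogeneous transport equation for $\Phi_\tau$ directly; the paper instead solves for $u_\tau$ with an integrating factor, converts to $\Phi_\tau$, and only mentions the $\Phi$-level equation in a remark.
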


\looseness=-1
The proof can be found in~\Cref{sub:pde_converge_proofs}. 
For the \meanflow objective, the key idea is that the PDE induced by the \meanflow target is a \emph{linear} PDE, hence it can be solved by the method of characteristics. Under suitable conditions, the characteristics trace back to the initial conditions, and the speed at which the characteristics are traversed is given by our weighting $\rho$.
For the Lagrangian objective, although the PDE is now non-linear as $u$ appears in the velocity in~\Cref{eq:lagrangian_gradient}, the same method of characteristics can be used to show convergence.

\looseness=-1
The closed form in \Cref{thm_meanflow} also provides insights into how training under the \meanflow loss structurally modifies the learned map: as training progresses, the \meanflow map learns to transport samples using the true flow to increasingly later times $t_\tau$, and the initialization $\Phi_0$ estimates the remaining jump to the time $t$.
On the other hand, under the Lagrangian objective, training teaches the flow map to transport samples from increasingly earlier initial times $s_\tau$, with the initial transport up to time $s_\tau$ given by our initialization $\Phi_0$. 
Both objectives learn the flow map, but by replacing the initialization with the true flow map from opposing directions.
\xhdr{Rates of convergence}
\looseness=-1
Using the convergence results of \Cref{thm_meanflow}, we can derive convergence rates for the learned maps in the $L^{\infty}$ (\Cref{prop_cvg_rate_mf}) and $L^2$-norm (\Cref{prop_rates}).
\begin{restatable}{theorem}{propCvgRateMf}
\label{prop_cvg_rate_mf}
 Assume \Cref{ass_regularity_vel,ass_regularity_mf} hold.
  In addition, assume $\rho(x, s, t) \geq \rho_{\mathrm{min}} > 0$ and $\|u_0\|_{\infty}, \|v\|_\infty \leq M$. Then the MeanFlow closed-form satisfies
    \begin{wideeqbox}
  \begin{align}
    \|\Phi^{\mathrm{MF}}_\tau(x, s, t) - \Phi^\star(x, s, t)\|_{\infty} \leq 2M e^{-\rho_{\mathrm{min}}\tau}
    \enspace.
  \end{align}
  \end{wideeqbox}
  In addition, if $v$ is uniformly $L$-Lipschitz for all $t$, then the Lagrangian closed-form satisfies
    \begin{wideeqbox}
  \begin{align}
    \|\Phi^{\mathrm{L}}_\tau(x, s, t) - \Phi^\star(x, s, t)\|_\infty \leq 2M e^{L} e^{-\rho_{\mathrm{min}}\tau}
    \enspace.
  \end{align}
  \end{wideeqbox}
\end{restatable}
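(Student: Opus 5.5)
We start from the closed forms of \Cref{thm_meanflow} and only need quantitative control of the characteristic times $t_\tau$ and $s_\tau$. For MeanFlow, along a characteristic the gradient flow reads $\partial_\tau u = -\rho\,(u - u_{\mathrm{tgt}})$. By the method of characteristics used in \Cref{thm_meanflow}, the intermediate time evolves as $\tfrac{\diff}{\diff\tau} t_\tau = \rho(\cdot)\,(t - t_\tau)$, evaluated along the characteristic, with $t_0 = s$. For the Lagrangian case the analogous law is $\tfrac{\diff}{\diff \tau} s_\tau = -\rho(\cdot)\,(s_\tau - s)$ with $s_0 = t$. The first step is to extract these speed equations from the proof of \Cref{thm_meanflow}. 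Using $\rho \geq \rho_{\min}$ and Grönwall then gives
\begin{align}
t - t_\tau \leq (t-s)e^{-\rho_{\min}\tau} \leq e^{-\rho_{\min}\tau},
\qquad
s_\tau - s \leq (t-s) e^{-\rho_{\min}\tau} \leq e^{-\rho_{\min}\tau},
\end{align}
since $(s,t) \in \Delta$ and so $t - s \leq 1$.

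\textbf{MeanFlow bound.} Write $y = \Phistar(x,s,t_\tau)$. By the semigroup property, $\Phistar(x,s,t) = \Phistar(y, t_\tau, t)$. Hence
\begin{align}
\Phi^{\mathrm{MF}}_\tau - \Phistar = \bigl(y + (t-t_\tau)u_0(y,t_\tau,t)\bigr) - \Phistar(y,t_\tau,t).
\end{align}
We have $\Phistar(y,t_\tau,t) - y = \int_{t_\tau}^t v(\Phistar(y,t_\tau,r),r)\,\diff r$, whose norm is at most $M(t-t_\tau)$, and the $u_0$ term is also at most $M(t-t_\tau)$. The triangle inequality gives $2M(t-t_\tau) \leq 2Me^{-\rho_{\min}\tau}$.

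\textbf{Lagrangian bound.} Write $z = \Phi_0(x,s,s_\tau)$. The error is
\begin{align}
\Phistar(z,s_\tau,t) - \Phistar(x,s,t) = \Phistar(z,s_\tau,t) - \Phistar(w,s_\tau,t), \qquad w := \Phistar(x,s,s_\tau).
\end{align}
Since $v$ is uniformly $L$-Lipschitz, Grönwall gives Lipschitz dependence on initial conditions, $\|\Phistar(z,s_\tau,t)-\Phistar(w,s_\tau,t)\| \leq e^{L(t-s_\tau)}\|z-w\| \leq e^L\|z-w\|$. Moreover $\|z - x\| = (s_\tau - s)\|u_0\| \leq M(s_\tau-s)$ and $\|w - x\| \leq M(s_\tau - s)$, so $\|z-w\| \leq 2M(s_\tau-s)$. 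Combining, the error is at most $2Me^L e^{-\rho_{\min}\tau}$.

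\textbf{Main obstacle.} The hardest step is justifying the speed ODE for $t_\tau$ (resp.\ $s_\tau$) with the right factor $\rho$. Along the characteristic, $\rho$ is evaluated at moving points $(\Phistar(x,s,t_\tau), t_\tau, t)$, which depend on $\tau$. I would handle this through the comparison argument only: the lower bound $\tfrac{\diff}{\diff\tau}(t-t_\tau) \leq -\rho_{\min}(t-t_\tau)$ is all that is needed, and existence and monotonicity of $t_\tau$ are already provided by \Cref{thm_meanflow}. If instead the paper's parametrization produces the time change as an integral $\int_0^\tau \rho\,\diff\sigma \geq \rho_{\min}\tau$ with $t-t_\tau = (t-s)e^{-\int\rho}$, the same bound follows directly.
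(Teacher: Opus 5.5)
Your proposal is correct and follows the paper's proof essentially step for step. For MeanFlow you use the same triangle-inequality split into a $u_0$ term and a $\int v$ term, giving $2M(t-t_\tau)$. For the Lagrangian case you use the same comparison of $\Phi_0(x,s,s_\tau)$ with $\Phistar(x,s,s_\tau)$, followed by the $e^{L(t-s_\tau)}$ Gr\"onwall--Lipschitz bound. The only cosmetic difference is that the paper obtains $t-t_\tau=(t-s)F^{-1}(\tau)$ from the integrating factor $\dot F=\rho F\geq\rho_{\min}F$, whereas you write an ODE for $t_\tau$ (resp.\ $s_\tau$) in the final time directly; this is equivalent because the characteristic system is autonomous in $\tau$.
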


\looseness=-1
The proof of \Cref{prop_cvg_rate_mf} and examples showing these bounds are tight (up to constants) can be found in \Cref{sub:convergence_rates_proof}. The lower bound on $\rho$ is a mild assumption; it holds over compact domains when $\rho$ is positive.

\Cref{prop_rates} formulates convergence rates with respect to the $L^2(p_s/\rho)$ norm (with the abuse of notation that $p_s$ is a function of $x, s, t$): for a function $f : \mathbb R^d \times \Delta \to \mathbb R^d$, its norm $L^2(g)$ with respect to a function $g : \mathbb R^d \times \Delta \to \mathbb R_{\geq 0}$ is given by
  \begin{align}
  \label{eq_l2_norm}
      \|f\|^2_{L^2(g)} = \int_{\mathbb R^d \times \Delta} g(x, s, t) \|f(x, s, t)\|^2 \diff x\diff s \diff t.
  \end{align}
The complementary convergence results (\Cref{prop_rates}) in the $\|\cdot \|_{L^2}$-norm require additional integrability conditions on the displacement of the average velocity solution $u_\tau$ of the gradient flow dynamics \Cref{eq_general_gradient_flow} versus the ground truth average velocity $u^\star(x, s, t) = (\Phi^{\star}(x, s, t) - x)/(t-s)$.
\begin{assumption}[Integrability of the gradient flow average velocities $u_\tau$]
    \label{ass_integrability}
    Suppose $u_\tau - \ustar \in C^1$ on the interior of $\mathbb R^d \times \Delta$ and $p_s\|u_\tau - \ustar\|^2  + p_s(t - s) |(v \cdot \nabla_x  + \partial_s) \|u_\tau - \ustar\|^2|$ is dominated by an integrable function independent of $\tau$ for $\tau \in [0, T]$, for all $T < \infty$. 
\end{assumption}

Under the norm defined in \Cref{eq_l2_norm} and \Cref{ass_integrability}, we show convergence of the average velocity solution of the gradient flow $u_\tau$ to the ground-truth average velocity $u^\star(x, s, t) = (\Phi^{\star}(x, s, t) - x)/(t-s)$. The proof of \Cref{prop_rates} can be found in \Cref{sub:convergence_rates_proof}.
\begin{restatable}{theorem}{propRatesLtwo}
\label{prop_rates}
  Assume \Cref{ass_regularity_vel,ass_regularity_mf,ass_integrability} hold,
   and $\rho(x, s, t) \geq \rho_{\mathrm{min}} > 0$.
  \looseness=-1
  If $\|u_0^{\mathrm{MF}} - \ustar\|^2_{L^2(p_s/\rho)} < \infty$ then the \meanflow trained average velocity $u_{\tau}^{\mathrm{MF}}$ satisfying \Cref{eq_general_gradient_flow} fulfills:
    \begin{wideeqbox}
    \begin{align}
        \|  u_\tau^{\mathrm{MF}} - \ustar \|^2_{L^2(p_s/\rho)} \leq e^{-\rho_{\mathrm{min}}\tau}\|  u_0^{\mathrm{MF}} - \ustar \|^2_{L^2(p_s/\rho)}
        \enspace,
    \end{align}
    \end{wideeqbox}
    hence, the average learned velocity $u_\tau^{\mathrm{MF}}$ converges to $u^{\star}$ with respect to the $L^2(p_s/\rho)$-norm.
\end{restatable}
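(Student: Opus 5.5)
Write $w_\tau = u_\tau^{\mathrm{MF}} - \ustar$ and $D = v\cdot\nabla_x + \partial_s$ for the transport derivative along characteristics. The plan is an energy argument on $E(\tau) = \tfrac12\|w_\tau\|^2_{L^2(p_s/\rho)}$: derive a linear PDE for $w_\tau$, differentiate $E$ in $\tau$, and show the transport part contributes a nonpositive term. Grönwall then gives the stated bound.

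\textbf{Equation for $w_\tau$.} The ground truth $\ustar$ satisfies the Eulerian identity (\Cref{eq:eulerian_id}). Rewritten in the average-velocity parametrization, this reads $\ustar = v + (t-s)D\ustar$. Subtracting it from the MeanFlow gradient (\Cref{eq:mf_loss_grad}) cancels the source term $v$, so the gradient flow \Cref{eq_general_gradient_flow} becomes
\begin{align*}
\partial_\tau w_\tau = -\rho\,\bigl(w_\tau - (t-s) D w_\tau\bigr).
\end{align*}

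\textbf{Differentiating the energy.} Multiplying by $(p_s/\rho)\,w_\tau$ and integrating gives
\begin{align*}
\tfrac{d}{d\tau}E(\tau) = -\int p_s\|w_\tau\|^2 + \tfrac12\int p_s (t-s)\, D\|w_\tau\|^2 .
\end{align*}
The interchange of $\tfrac{d}{d\tau}$ and the integral is justified by dominated convergence using \Cref{ass_integrability} on $[0,T]$. The weight $p_s/\rho$ is chosen so that $\rho$ cancels in the transport term.

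\textbf{The transport term (main obstacle).} Integrate $D$ by parts in $(x,s)$ over $\mathbb R^d\times\{0\le s\le t\}$, for fixed $t$. The continuity equation $\partial_s p_s + \nabla_x\cdot(p_s v) = 0$ gives $D^*(p_s(t-s)) = -\partial_s\bigl((t-s)p_s\bigr) - \nabla_x\cdot\bigl((t-s)p_s v\bigr) = p_s$. The boundary terms should vanish or have a good sign. At $s=t$ the factor $(t-s)$ kills it. At $s=0$ the boundary term is $-\tfrac12\int (t-0)\,p_0\|w\|^2 \le 0$. Spatial decay is needed to discard terms at infinity; it follows from the integrability assumption, possibly after truncating with cutoffs. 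The result is
\begin{align*}
\tfrac12\int p_s (t-s)\, D\|w_\tau\|^2 \le \tfrac12\int p_s\|w_\tau\|^2 .
\end{align*}
Keeping track of boundary signs and justifying the integration by parts with only $C^1$ regularity is where I expect the real work.

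\textbf{Conclusion.} Combining the two previous steps gives
\begin{align*}
\tfrac{d}{d\tau}E(\tau) \le -\tfrac12\int p_s\|w_\tau\|^2 = -\int \rho\cdot \tfrac{p_s}{2\rho}\|w_\tau\|^2 \le -\rho_{\min}E(\tau).
\end{align*}
Grönwall then yields $\|w_\tau\|^2_{L^2(p_s/\rho)} \le e^{-\rho_{\min}\tau}\|w_0\|^2_{L^2(p_s/\rho)}$, which is finite by hypothesis, hence convergence. If the constant does not come out exactly as stated, I would instead expand around the closed form of \Cref{thm_meanflow}, but the energy route seems cleanest.
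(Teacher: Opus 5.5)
Your proposal is correct and matches the paper's proof step for step. It uses the same error equation $\partial_\tau w_\tau = -\rho(I-(t-s)\mathcal D_v)w_\tau$ and the same $L^2(p_s/\rho)$ energy identity with $\rho$ cancelling. The same integration by parts yields $\|w_\tau\|^2_{L^2(p_s)}$ minus the nonnegative $s=0$ boundary term, and $\rho\ge\rho_{\mathrm{min}}$ with Gr\"onwall then gives exactly the stated constant. The one difference is that the paper integrates by parts in Lagrangian form, substituting $x=\Phistar(z,0,s)$ so that $p_s\,\diff x$ becomes $p_0\,\diff z$ and $\mathcal D_v$ becomes $\frac{\diff}{\diff s}$ along each trajectory; this is equivalent to your continuity-equation argument but avoids the spatial boundary terms and the differentiability of $p_s v$ that you flagged as the main technical worry.
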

\looseness=-1
Again the assumption $\|u_0^{\mathrm{MF}} - \ustar\|^2_{L^2(p_s/\rho)} < \infty$ is mild as it is true on compact domains when $\rho > 0$.
As stated in \Cref{sec_reminders}, the \meanflow objective crucially relies on the stop gradient operator, which alters the gradient direction. Specifically, the stop gradient may prevent the dynamics $\Phi_\tau$ from converging toward the true flow map $\Phi^{\star}$.
However, somewhat remarkably, \Cref{prop_rates} shows that the update direction induced by the \meanflow loss implicitly leads us closer to the true flow map anyway: $\Phi_\tau^{\mathrm{MF}} \rightarrow \Phi^{\star}$.


\subsection{Instability and normalization of gradient descent}
\label{sub:disc_failure}

\looseness=-1
In practice, we cannot truly follow the gradient flow (\Cref{eq_general_gradient_flow}). Instead, we may utilize gradient descent-like algorithms: a learning rate $\eta \ll 1$ is chosen, and a gradient step is taken with step size $\eta$
\begin{align}
  u_{k+1} &= u_{k} - \eta \nabla_u \mathcal{L}(u_k)
  \enspace, 
  \label{eq:gradient_descent}
\end{align}
where $\mathcal L = \mathcal L^{\mathrm{MF}}$ or $\mathcal L^{\mathrm{L}}$.
\emph{In contrast to the gradient flow results, we can show that gradient descent is not guaranteed to converge, even for the trivial velocity $v = 0$.}
The idea is that the differential operators present in both $\nabla_u \mathcal L^{\mathrm{MF}}$ and $\nabla_u \mathcal L^{\mathrm{L}}$ are unbounded and can amplify errors.
One can exhibit initial velocities $u_0$ such that gradient descent diverges when applied with the \meanflow and Lagrangian objectives (see \Cref{sub:adaptive_gradient_descent_proof} for the proof of \Cref{prop_counter_example} and details of the counterexample).
\begin{restatable}{proposition}{propCounterExample}\label{prop_counter_example}
    For every $\eta > 0$, $\varepsilon > 0$, and $\rho \in C^\infty$, there exists $u_0$ with initial error $ u_0 - \ustar \in L^2(\rho) \cap C_c^\infty$ and $\|u_0 - \ustar\|_\infty \leq \varepsilon$, but whose iterates under \Cref{eq:gradient_descent} diverge for $\mathcal L = \mathcal L^{\mathrm{MF}}$, or for $\mathcal L = \mathcal L^{\mathrm{L}}$ with $v = 0$.
\end{restatable}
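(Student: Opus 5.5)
The plan is to reduce both cases to a \emph{linear} iteration on the error $e_k := u_k - \ustar$, and then to use the unboundedness of the transport operator appearing in \Cref{eq:mf_loss_grad,eq:lagrangian_gradient}.

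\textbf{Step 1: linear error recursion.} For \meanflow, \Cref{eq:mf_loss_grad} is affine in $u$. Write $D := v\cdot\nabla_x + \partial_s$. The Eulerian identity (\Cref{eq:eulerian_id}), rewritten for $\ustar$, says exactly that $\nabla_u\mathcal L^{\mathrm{MF}}(\ustar)=0$. Subtracting the two gradients gives
\begin{align*}
  e_{k+1} = T e_k, \qquad T e := e - \eta\rho\,\bigl(e - (t-s) D e\bigr).
\end{align*}
For the Lagrangian loss with $v=0$, the velocity term in \Cref{eq:lagrangian_gradient} vanishes, so the gradient is again linear. The same recursion holds with $D$ replaced by $-\partial_t$, and here $\ustar = 0$. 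In both cases $T$ is a linear differential operator of order one with $C^\infty$ coefficients when $\rho\in C^\infty$. For general $v$ in the \meanflow case the coefficients of $D$ are only as smooth as $v$; I would first treat smooth $v$ and handle the general case by working in the characteristic coordinates, in which $D$ is just $\partial_s$. In either case $T$ maps $C_c^\infty(K)$ to itself for a fixed compact set $K$ in the interior of $\mathbb R^d\times\Delta$, on which $t-s\ge\delta>0$.

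\textbf{Step 2: powers of $T$ are unbounded from $C^m$ to $L^2(\rho)$.} Fix a bump $\chi\in C_c^\infty(K)$ and set $\varphi_\omega = \chi\cos(\omega s)$; for the Lagrangian case use $\cos(\omega t)$. Expanding $T^k\varphi_\omega$, each application of $D$ either hits the oscillation, producing a factor $\omega$, or hits a smooth coefficient, costing only $O(1)$. The unique term of order $\omega^k$ is
\begin{align*}
  \bigl(\eta\rho\,(t-s)\omega\bigr)^k \chi \cdot \mathrm{trig}(\omega s),
\end{align*}
and all other terms are $O_k(\omega^{k-1})$. Its $L^2(\rho)$ norm is at least $c_k\,\omega^k$, since $\rho(t-s)\ge\rho_{\min,K}\,\delta$ on $K$, and averaging $\sin^2$ or $\cos^2$ over high frequencies gives a factor $1/2$. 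By contrast, any $C^m$ seminorm of $\varphi_\omega$ is $O(\omega^m)$. Taking $k>m$ and letting $\omega\to\infty$ shows that
\begin{align*}
  \sup_k \; \|T^k\|_{C^m(K)\to L^2(\rho)} = \infty \quad \text{for every } m.
\end{align*}

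\textbf{Step 3: a single bad initialization via uniform boundedness.} The space $C_c^\infty(K)$ is a Fréchet space, and the family $\{T^k\}$ consists of continuous linear maps into $L^2(\rho)$. Suppose $\sup_k\|T^k e\|_{L^2(\rho)}<\infty$ for every $e\in C_c^\infty(K)$. Then Banach--Steinhaus for Fréchet spaces makes the family equicontinuous: there exist $m$ and $C$ with $\|T^k e\|\le C\|e\|_{C^m}$ for all $k$ and all $e$. This contradicts Step 2. Hence some $e_0\in C_c^\infty(K)$ has $\sup_k\|T^k e_0\|_{L^2(\rho)}=\infty$. By linearity, $\lambda e_0$ with $\lambda$ small has $\|\lambda e_0\|_\infty\le\varepsilon$ and still has unbounded iterates. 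Then $u_0 = \ustar + \lambda e_0$ gives the claim; its error lies in $L^2(\rho)\cap C_c^\infty$ because it has compact support.

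\textbf{Main obstacle.} The delicate point is Step 2. Because $\rho(t-s)$ does not commute with $D$, one must check that the lower-order terms cannot cancel the $\omega^k$ term. For each fixed $k$ this holds, since those terms carry strictly fewer powers of $\omega$ with $k$-dependent but $\omega$-independent constants. It is precisely the uniform boundedness principle that lets us avoid controlling these combinatorial constants uniformly in $k$. If one prefers divergence in the strict sense $\|e_k\|\to\infty$ rather than $\limsup=\infty$, one would instead need explicit quasi-eigenfunctions of $T$. I would only attempt this in the special case of constant $\rho$, where $e^{i\omega s}$ gives growth factor $|1-\eta\rho + i\eta\rho(t-s)\omega|>1$ directly.
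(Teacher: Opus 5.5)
Your argument is correct, but it takes a genuinely different route from the paper's. The paper does not use the uniform boundedness principle; it writes down exact eigenfunctions of the error recursion.

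For $\rho=1$ it takes $e_0=\varepsilon(t-s)^m a$. Then $-(t-s)\mathcal D_v e_0=m e_0$ for every $v$, and $(t-s)\partial_t e_0=m e_0$ in the Lagrangian case. Hence $e_k=(1-\eta(m+1))^k e_0$, and any $m$ with $\eta(m+1)>2$ gives geometric blow-up. For general smooth $\rho$ it solves $\rho(e_0-(t-s)\mathcal D_v e_0)=m e_0$ along characteristics on a patch away from the diagonal, which gives $e_{k+1}=(1-\eta m)e_k$ there. It then multiplies by a cutoff equal to one near a compact set and relies on locality of the operator.

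Your order-counting plus Banach--Steinhaus argument on the Fr\'echet space $C_c^\infty(K)$ needs no transport ODE and no bookkeeping under the cutoff. It only uses that $T$ is local and that its principal part $\eta\rho(t-s)\mathcal D_v$ (resp.\ $-\eta\rho(t-s)\partial_t$) does not degenerate on $K$. The price is that it is non-constructive and only gives $\sup_k\|e_k\|_{L^2(\rho)}=\infty$. That suffices for divergence, since an unbounded sequence cannot converge. The paper, however, gets $\|e_k\|\to\infty$ at an explicit geometric rate. It also identifies \emph{which} modes are unstable, namely powers $(t-s)^m$ with $m$ of order $2/\eta$ or larger, which is what \Cref{fig:eta_vs_mode} probes.

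Two small corrections. First, your closing aside does not work as stated. We have $Te^{i\omega s}=(1-\eta\rho+i\eta\rho(t-s)\omega)e^{i\omega s}$, whose multiplier depends on $s$ and is differentiated by the next application of $\partial_s$. So $e^{i\omega s}$ is not an eigenfunction even for constant $\rho$. The exact eigenfunctions of the Euler-type operator $(t-s)\partial_s$ are powers of $(t-s)$. That is precisely the paper's choice, and it gives strict divergence without the uniform boundedness principle.

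Second, your Banach--Steinhaus step genuinely needs $v\in C^\infty$, so that each $T^k$ is continuous on $C_c^\infty(K)$. Passing to characteristic coordinates does not rescue a merely Lipschitz $v$, because smoothness in those coordinates is not smoothness in the original ones. The paper's $\rho=1$ eigenfunction is $x$-independent, so on the core region it does not depend on $v$ at all. Its cutoff and its general-$\rho$ construction, however, tacitly rely on the same regularity.
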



However, if we adapt our step size according to the magnitude of the update, we can avoid a blow-up. 
Since the residual $ u_k - u_{\mathrm{tgt},k}$ can become arbitrarily large, we consider an update of the form
\begin{align}
  u_{k+1} = u_{k} - \eta \rho \frac{r_{k}}{(\|r_k\|_{L^2(\rho)}^2 + c)^p}, 
  \quad
  \text{ with } \quad r_k := u_k - u_{\mathrm{tgt},k} \enspace,
  \label{eq:normalized_update}
\end{align}
for some small fixed $c >0$, and a choice of normalization weighting $p$.
We now show that choosing $p \geq 0.5$ enables control over the update magnitude.
\begin{restatable}{theorem}{propNormalizedConvergence}
\label{prop:normalized_convergence}
  Assume that the velocity $v$, the density $\rho$ and $u_0$ fulfill  \Cref{ass_regularity_vel,ass_regularity_mf}.
  In addition, assume that the iterates of gradient descent satisfy the discrete equivalent of \Cref{ass_integrability}, and  
  that $p_s$ is bounded above: $p_s(x) \leq p_{\mathrm{max}} < \infty$, $\rho$ is bounded below: $\rho(x, s, t) \geq \rho_{\mathrm{min}} > 0$ and $\|u_0 - \ustar\|^2_{L^2(p_s/\rho)} < \infty$. 
  Consider the update scheme given in \Cref{eq:normalized_update}.  
  \begin{itemize}[topsep=0pt, partopsep=0pt, itemsep=0pt, parsep=0pt, leftmargin=*]
      \item For $p \geq 0.5$, the updates have uniformly bounded norms: $\|u_{k+1} - u_k\|^2_{L^2(p_s/\rho)} \leq M < \infty$.
      \item $p=1$ yields convergence for the \meanflow objective to within a neighborhood: if $\| u_k - u_{\mathrm{tgt},k} \|^2_{L^2(\rho)} \leq M(k+1)$ for some $M>0$, i.e. the norm of the residual grows at most linearly, then 
  \end{itemize}
    \begin{wideeqbox}
    \begin{align}
        \limsup_{k \to \infty}\|  u_{k}^{\mathrm{MF}} - \ustar \|^2_{L^2(p_s/\rho)} \leq  \eta p_{\mathrm{max}} (\rho_{\mathrm{min}}^{-1} + \eta c^{-1})
        \enspace.
    \end{align} 
    \end{wideeqbox}
\end{restatable}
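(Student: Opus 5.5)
The plan is to work with the error $e_k := u_k - \ustar$ in the weighted norm $\|\cdot\|_{L^2(p_s/\rho)}$, and to use two facts throughout. First, $\ustar$ is a fixed point of the MeanFlow target. Second, the MeanFlow residual is affine in $u$:
\begin{align*}
r_k = u_k - u^{\mathrm{MF}}_{\mathrm{tgt},k} = e_k - (t-s)\,\mathcal D e_k, \qquad \mathcal D := v\cdot\nabla_x + \partial_s ,
\end{align*}
which follows from the Eulerian identity for $\ustar$.

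\textbf{Bounded updates.} The update is $u_{k+1}-u_k = -\eta\rho r_k/(\|r_k\|^2_{L^2(\rho)}+c)^p$. Its squared norm is
\begin{align*}
\|u_{k+1}-u_k\|^2_{L^2(p_s/\rho)} = \eta^2 \frac{\int p_s \rho \|r_k\|^2}{(\|r_k\|^2_{L^2(\rho)}+c)^{2p}} \le \eta^2 p_{\max}\frac{\|r_k\|^2_{L^2(\rho)}}{(\|r_k\|^2_{L^2(\rho)}+c)^{2p}} .
\end{align*}
Write $a=\|r_k\|^2_{L^2(\rho)}$. For $p\ge 1/2$ the function $a/(a+c)^{2p}$ is bounded on $[0,\infty)$: it is at most $1$ if $c\ge1$, and at most $\max(1,c^{1-2p})$ otherwise. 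This gives a uniform constant $M$, and this part is routine. Note that $p_s\rho/\rho = p_s$, so the weights combine exactly as required.

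\textbf{Convergence for $p=1$.} Set $\alpha_k=\eta/(\|r_k\|^2_{L^2(\rho)}+c)$, so that $e_{k+1}=e_k-\alpha_k\rho r_k$. Expanding the square gives
\begin{align*}
\|e_{k+1}\|^2 = \|e_k\|^2 - 2\alpha_k\langle e_k,\rho r_k\rangle_{L^2(p_s/\rho)} + \alpha_k^2\|\rho r_k\|^2_{L^2(p_s/\rho)} .
\end{align*}
The last term is bounded by $\alpha_k^2 p_{\max}\|r_k\|^2_{L^2(\rho)} \le \eta\alpha_k p_{\max}$, using $\alpha_k \|r_k\|^2_{L^2(\rho)} \le \eta$.

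The cross term is $\int p_s\langle e_k, e_k-(t-s)\mathcal D e_k\rangle$. The piece $\int p_s (t-s)\tfrac12\mathcal D\|e_k\|^2$ is handled by integration by parts in $(x,s)$, exactly as in the proof of \Cref{prop_rates}. The continuity equation $\partial_s p_s+\nabla\cdot(p_s v)=0$ makes the adjoint of $p_s\mathcal D$ acting on $(t-s)$ produce $-p_s$, plus a boundary term at $s=t$ that vanishes because of the factor $(t-s)$. The discrete analogue of \Cref{ass_integrability} justifies this step. The result is
\begin{align*}
\langle e_k,\rho r_k\rangle_{L^2(p_s/\rho)} \ge \tfrac12\|e_k\|^2_{L^2(p_s)} \ge \tfrac{\rho_{\min}}{2}\|e_k\|^2_{L^2(p_s/\rho)} ,
\end{align*}
where I expect to recover precisely the dissipation constant used in \Cref{prop_rates}. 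This yields the recursion
\begin{align*}
\|e_{k+1}\|^2 \le (1-\alpha_k\rho_{\min})\|e_k\|^2 + \eta p_{\max}\alpha_k .
\end{align*}

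\textbf{Closing the recursion.} The fixed-point level of this recursion is $\eta p_{\max}/\rho_{\min}$. The extra $\eta^2 p_{\max}/c$ in the stated bound accounts for the fact that $\alpha_k\le \eta/c$ may exceed $1/\rho_{\min}$. To handle that case I would instead bound the overshoot crudely by $\alpha_k^2 p_{\max}\|r_k\|^2 \le \eta^2 p_{\max}/c$. Combining both cases gives the neighborhood $\eta p_{\max}(\rho_{\min}^{-1}+\eta c^{-1})$.

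To obtain the $\limsup$, the contraction must act infinitely often with enough total weight, which requires $\sum_k\alpha_k=\infty$. This is where the linear-growth hypothesis enters: $\alpha_k\ge \eta/(M(k+1)+c)$ is not summable-small, so $\prod_k(1-\alpha_k\rho_{\min})\to0$. A standard lemma for recursions $a_{k+1}\le(1-\beta_k)a_k+\beta_k\gamma$ with $\sum_k\beta_k=\infty$ then gives $\limsup_k a_k\le\gamma$.

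\textbf{Main obstacle.} The hardest step is getting the cross-term inequality rigorously in the discrete setting: the integration by parts and its boundary terms, and matching the constant exactly with the case split on $\alpha_k\rho_{\min}\lessgtr 1$. Failing an exact match, I would settle for the slightly looser combined constant.
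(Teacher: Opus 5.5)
Your proposal is correct and follows the paper's proof essentially step for step: you use the same bound $\eta^2 p_{\max}\,a/(a+c)^{2p}$ on the update norm and the same expansion of $\|e_{k+1}\|^2_{L^2(p_s/\rho)}$. You handle the cross term by the same integration by parts along characteristics as in \Cref{prop_rates}; the $s=0$ boundary term you did not mention equals $\tfrac12\int t\,p_0\|e_k(x,0,t)\|^2\,\diff x\,\diff t\ge 0$, so your inequality stands. This gives the identical recursion $\|e_{k+1}\|^2\le(1-\alpha_k\rho_{\min})\|e_k\|^2+\eta p_{\max}\alpha_k$, and you use $\sum_k\alpha_k=\infty$ from linear residual growth exactly as the paper does.

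The only difference is cosmetic, in how the recursion is closed. The paper shows that the ball of squared radius $\eta p_{\max}(\rho_{\min}^{-1}+\eta c^{-1})$ is invariant and telescopes $T_k=\|e_k\|^2-\eta p_{\max}/\rho_{\min}$. Your case split on $\alpha_k\rho_{\min}\lessgtr 1$ (convex combination versus the crude $\eta^2p_{\max}/c$ overshoot bound) reaches the same conclusion, in fact the marginally sharper $\max(\eta p_{\max}/\rho_{\min},\,\eta^2 p_{\max}/c)$. You do need to add the one-line invariance argument covering the case where infinitely many steps have $\alpha_k\rho_{\min}>1$.
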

\looseness=-1
A proof is found in \Cref{sub:adaptive_gradient_descent_proof}. 
\Cref{prop:normalized_convergence} shows that, for $p=1$, the iterates of the normalized gradient descent (defined in \Cref{eq:normalized_update}) on the \meanflow objective eventually remain in an $\mathcal O(\eta^{1/2})$ neighborhood of the true flow map $\Phi^\star$. 
In \meanflow \citep{gengMeanFlowsOnestep2025}, normalization is done through taking adaptive loss weights $w = 1/(\| u_{k} - u^{\mathrm{MF}}_{\mathrm{tgt}, k} \|^2_2 + c)^p$.
Our result shows that, when the regression error is integrated over space, taking adaptive weights with this integrated error and $p = 1.0$ has iterates that remain in an $\mathcal O(\eta^{1/2})$ ball around the true solution. 
$p = 1.0$ is the value that is found to work the best for from-scratch training on ImageNet $256\!\!\times\!\!256$ \citep{gengMeanFlowsOnestep2025}.

\section{Related work}
\label{sec_related_work}


\xhdr{Trajectory distillation}
One of the most relevant related works is \emph{consistency models} \citep{song2023consistency,song2024improved}, which build a function $f(x_s, s)$ mapping directly to the trajectory endpoint $x_1$. Continuous-time consistency models are trained with a similar loss to the \meanflow loss \citep{luSimplifyingStabilizingScaling2025,sabourAlignYourFlow2025}, except with a fixed $t = 1$, and hence are amenable to the same analysis as we have presented. In practice consistency models also require similar tricks such as tangent normalization for effective training \citep{luSimplifyingStabilizingScaling2025}.

\xhdr{Flow maps}
\looseness=-1
Many works have proposed learning schemes for flow maps. \citet{boffiHowBuildConsistency2025} derive fundamental equations for flow maps and introduce the LSD and ESD objectives. \citet{gengMeanFlowsOnestep2025} introduce the \meanflow objective, which is further improved in \citet{gengImprovedMeanFlows2025} by replacing a stochastic estimate for $v$ with a self-distilled diagonal term. \citet{luOnestepLatentfreeImage2026} reparametrize the objective as $u = (\hat x(x, s, t) - x) / (1 - s)$. $\hat x$ can then be shown to lie in the convex hull of $p_1$ \citep{leeFlowMapLanguage2026}. Align Your Flow \citep{sabourAlignYourFlow2025} introduces flow map distillation methods derived from consistency estimates. These are equivalent to the methods presented in this paper, up to reweighting. Terminal velocity matching \citep{zhouTerminalVelocityMatching2026} uses a modified LSD loss, where the stop gradient teacher velocity $u$ is replaced with an EMA version.
In \citet{goldsteinFlowMapLearning2026}, the authors argue for an alternative loss to \meanflow, due to the presence of spurious minimizers. While this logic is valid for the ESD objective, our results show that for the \stopgradient \meanflow objective, optimization does not stall in a local minimum.

\xhdr{Concurrent work}
\looseness=-1
In very recent work, \citet{shen2026ilearnedstopworrying} derived a regression principle for \stopgradient dynamics. 
Using this, they derive the same closed form for the functional gradient flow for the \meanflow and Lagrangian losses that we have presented in \Cref{thm_meanflow} through a slightly different analysis, and achieve similar $L^\infty$ convergence guarantees.
\citet{shen2026ilearnedstopworrying} encompasses a larger class of problems such as reinforcement learning, but their convergence results are only valid for $\rho=1$.
Here we assume the existence of a density $\rho$ which alters the gradient dynamics, and show convergence in this broader setting.
We also introduce novel $L^2$ convergence rates, and our results on gradient descent are independent of prior work.


\section{Experiments}
\label{sec_experiments}


Experiments on synthetic data (\Cref{fig:equals_closed_form_1d}) validate the faithfulness of the gradient flow closed-form of the flow maps derived in  \Cref{thm_meanflow}.
Experiments on standard image datasets (\Cref{fig_images_ffhq,tab:fid_p_ablation}) confirm the importance of proper adaptive normalization  (\Cref{prop:normalized_convergence}).

\xhdr{Synthetic experiments}
First, we show that learned flow map training dynamics on synthetic data accurately reproduce the closed-form dynamics ($\Phi_\tau^{\mathrm{MF}}$ and $\Phi_\tau^{\mathrm{L}}$) derived in \Cref{thm_meanflow}.
\Cref{thm_meanflow} assumes functional  (\emph{i.e.}, infinite dimensional) optimization:
to recreate the assumed functional setting as accurately as possible, in \Cref{fig:equals_closed_form_1d} we parametrize the average velocity $u$ pointwise on a discrete grid in $\mathbb R \times \Delta$, removing spatial coupling between parameters.
Average velocities are trained with grid function backbones to transport $\mathcal N(0, 1)$ to $\mathcal N(1, 0.36)$ through a linear interpolant path, regressing against the ground truth velocity $v$.
In this setting, \Cref{fig:equals_closed_form_1d} displays the learned average velocities (solid blue and orange lines) and the closed form average velocities, $u_\tau^{\mathrm{MF}}$ and $u_\tau^{\mathrm{L}}$ (black dashed lines), for multiple optimization times $\tau$.
The solid gray line represents the ground-truth average velocity $\ustar$ we try to amortize.
For completeness, exhaustive empirical details can be found in \Cref{sub:validation_appendix}.

\begin{figure}[t]
  \centering
  \includegraphics[width=\textwidth]{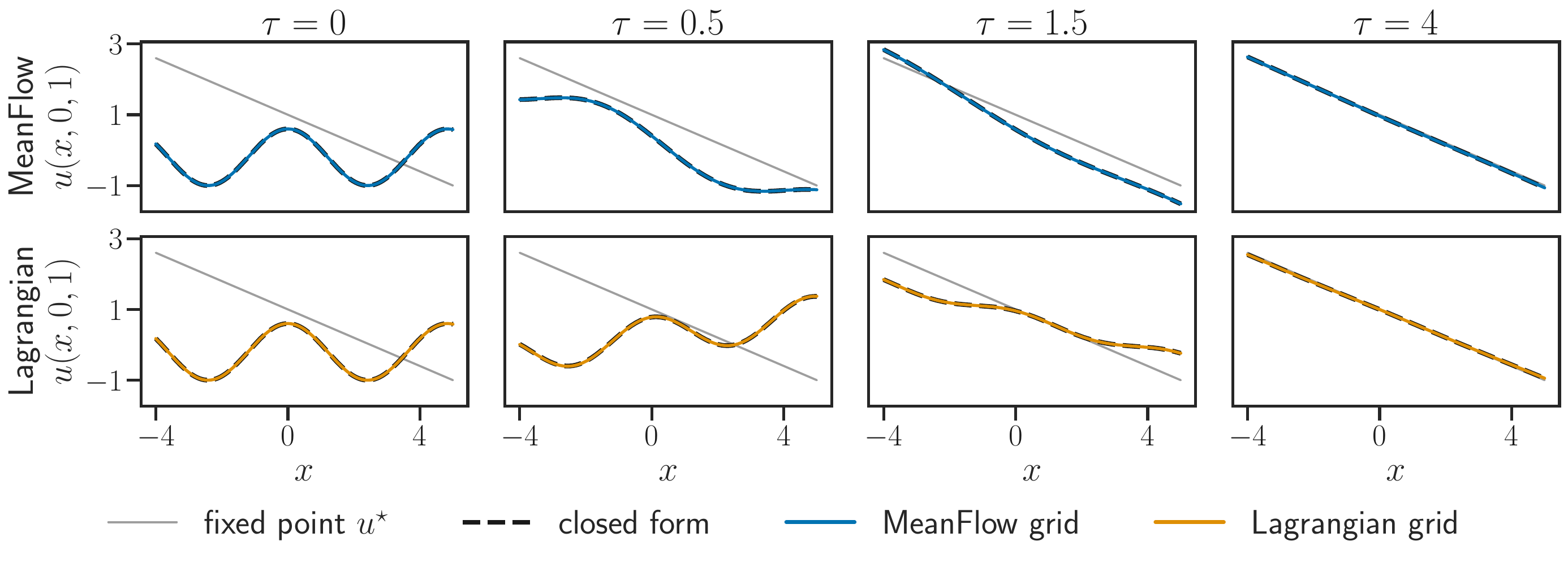}
  \centering
  \caption{\looseness=-1
  \emph{Synthetic Gaussian-to-Gaussian dataset.}
  Snapshots of grid function learned average velocities ($u_k$, solid blue/orange lines) versus their theoretically predicted counterparts $u_\tau$ at four optimization times $\tau$. The two show strong agreement across optimization times $\tau$ and for both the \meanflow (top) and Lagrangian average velocity target (bottom), and both relax towards the ground truth $\ustar$.
  }
  \label{fig:equals_closed_form_1d}
\end{figure}

\begin{figure}[tb]
    \centering
    \includegraphics[width=\linewidth]{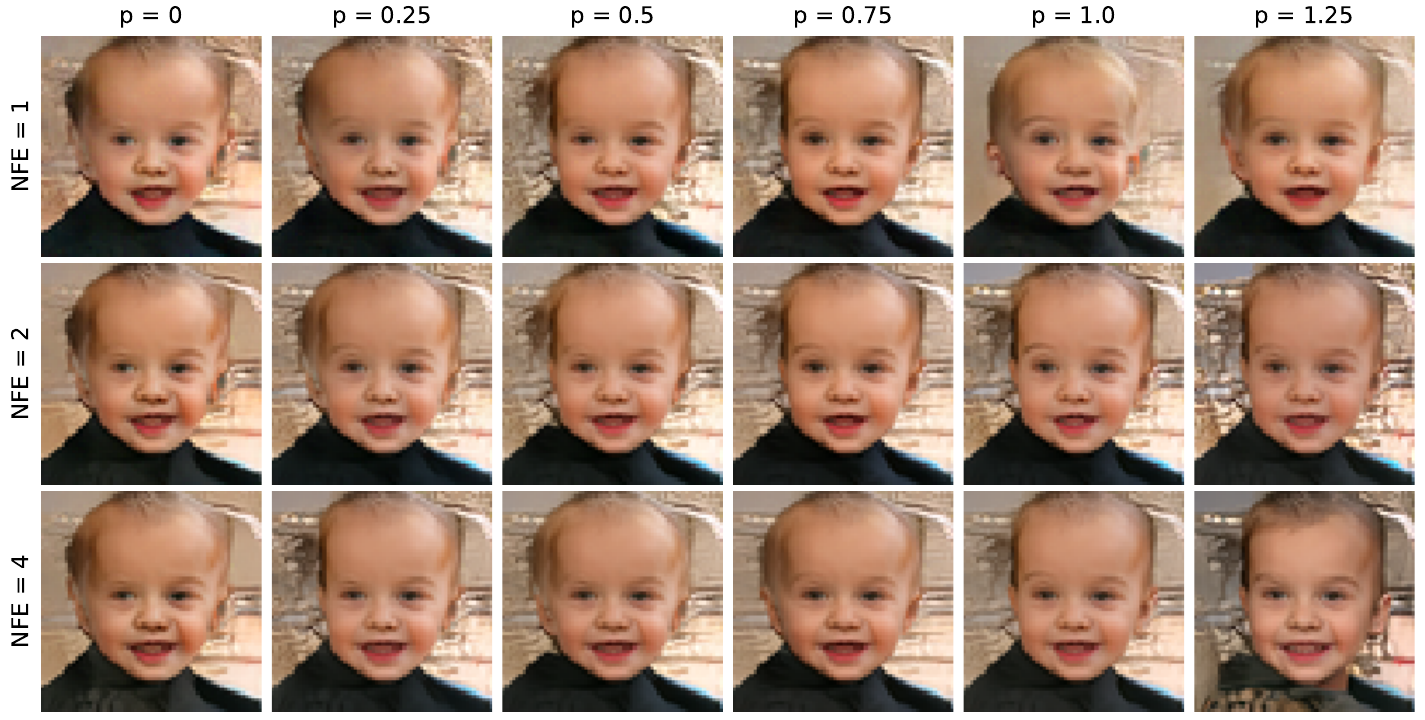}
    \caption{FFHQ dataset. The images are generated from a fixed seed, over different numbers of function evaluations (NFE), with flow maps learned with different normalization exponents $p$.
    }
    \label{fig_images_ffhq}
\end{figure}

\xhdr{Comments on \Cref{fig:equals_closed_form_1d}}
\Cref{fig:equals_closed_form_1d} shows that the learned average velocities match their theorized snapshots $u_\tau^{\mathrm{MF}}$ and $u_\tau^{\mathrm{L}}$ from \Cref{eq_cf_mf,eq_cf_lag}: the black dashed line exactly matches the solid blue/orange lines at each optimization time $\tau$.
Further experiments analyzing the effect of loss weighting and step size on stability are in \Cref{sub:validation_appendix}.

\xhdr{Impact of adaptive gradient normalization}
Our developed theory suggests that flow map training is susceptible to instability due to large magnitude updates, and that adaptive normalization offers a solution. We show that the empirical performance of flow maps distilled from pretrained velocity teachers improves when adaptive normalization is used.
Here we perform experiments on standard image datasets CIFAR-$10$ \citep{Krizhevsky2009} and FFHQ 64$\times$64 \citep{karrasStyleBasedGeneratorArchitecture2019}.
For CIFAR-$10$, we used the pretrained velocity of \citet{tong2024improving}. For FFHQ 64$\times$64, we used the variance exploding denoiser from \citet{EDM} converted into a flow matching model.
The backbone for the CIFAR-10 and FFHQ 64$\times$64 flow maps is an EDM2 U-Net \citep{EDM2Karras} with modified time-embeddings for added stability when computing temporal derivatives \citep{luSimplifyingStabilizingScaling2025}.
We use the \meanflow loss as in \Cref{eq:mf_loss}, weighted by a factor $w = \mathrm{sg}( (\|u_k-u_{\mathrm{tgt},k}\|_2^2 + c)^{-p})$ for various choices of $p$.
Fréchet Inception Distances \citep{heusel2017gans} were computed using torchmetrics \citep{detlefsen2022torchmetrics}.
In \Cref{tab:fid_p_ablation}, we display FIDs of the models with exponential moving average obtained after 200K steps of training, for a variety of values of $p$.
Other experimental details, along with results for the Lagrangian loss on CIFAR-10, are in \Cref{app_expe_large_scale_details}.

\xhdr{Comment on \Cref{tab:fid_p_ablation,fig_images_ffhq}}
\Cref{tab:fid_p_ablation} compares values of the adaptive normalization exponent $p=0$ to $p=1.5$ and multiple numbers of function evaluations (NFE $1$, $2$, $4$), and for multiple high-dimensional datasets.
We observe that $p=0.5$ to $p=1$ gives good performance across datasets.
For $p < 0.5$, the models have noticeably worse FIDs, which corresponds to the region in which the update is possibly unbounded.
For a fixed seed, \Cref{fig_images_ffhq} displays a generated image for multiple values of normalization exponent $p$ and NFE ($1$, $2$, $4$).
Qualitatively $p\leq 0.25$ and $p=1.25$ yield the poorest generated images.

\begin{table}[tb]
\centering
\caption{
FID $\downarrow$ for MeanFlow with different normalization exponents $p$ on CIFAR-10 and on FFHQ 64$\times$64.
Results are reported as mean $\pm$ standard deviation over three runs.}
\label{tab:fid_p_ablation}
\footnotesize
\setlength{\tabcolsep}{4pt}
\renewcommand{\arraystretch}{1.05}
\begin{tabular}{cccccccc}
\toprule
& \multirow{2}{*}{
 \raisebox{-11pt}[0pt][0pt]{\shortstack{Bounded\\[-1pt]updates}}} & \multicolumn{3}{c}{CIFAR-10} & \multicolumn{3}{c}{FFHQ 64$\times$64} \\
\cmidrule(lr){3-5} \cmidrule(lr){6-8}
$p$ &   & NFE 1 & NFE 2 & NFE 4 & NFE 1 & NFE 2 & NFE 4 \\
\midrule
0.00 & \textcolor{red}{\xmark} & $10.62$ {\scriptsize $\pm\,0.21$} & $10.00$ {\scriptsize $\pm\,0.69$} & $9.60$ {\scriptsize $\pm\,0.61$} & $15.38$ {\scriptsize $\pm\,0.84$} & $12.67$ {\scriptsize $\pm\,0.60$} & $12.49$ {\scriptsize $\pm\,0.43$} \\
0.25 & \textcolor{red}{\xmark} & $ 6.81$ {\scriptsize $\pm\,0.23$} & $ 6.70$ {\scriptsize $\pm\,0.30$} & $6.49$ {\scriptsize $\pm\,0.18$} & $10.92$ {\scriptsize $\pm\,0.43$} & $ 9.46$ {\scriptsize $\pm\,0.97$} & $ 9.72$ {\scriptsize $\pm\,0.68$} \\
0.50 & \textcolor{green!60!black}{\cmark} & $ 5.51$ {\scriptsize $\pm\,0.06$} & $ 5.55$ {\scriptsize $\pm\,0.36$} & $4.93$ {\scriptsize $\pm\,0.37$} & $ 8.93$ {\scriptsize $\pm\,0.21$} & $ 7.89$ {\scriptsize $\pm\,0.68$} & $ 8.15$ {\scriptsize $\pm\,0.53$} \\
0.75 & \textcolor{green!60!black}{\cmark} & $\mathbf{5.46}$ {\scriptsize $\pm\,0.06$} & $\mathbf{5.05}$ {\scriptsize $\pm\,0.16$} & $4.78$ {\scriptsize $\pm\,0.04$} & $\mathbf{8.67}$ {\scriptsize $\pm\,0.30$} & $ 6.86$ {\scriptsize $\pm\,0.91$} & $\mathbf{7.69}$ {\scriptsize $\pm\,0.74$} \\
1.00 & \textcolor{green!60!black}{\cmark} & $ 5.61$ {\scriptsize $\pm\,0.20$} & $ 5.17$ {\scriptsize $\pm\,0.05$} & $4.69$ {\scriptsize $\pm\,0.04$} & $10.11$ {\scriptsize $\pm\,0.21$} & $\mathbf{5.64}$ {\scriptsize $\pm\,1.11$} & $ 7.85$ {\scriptsize $\pm\,0.72$} \\
1.25 & \textcolor{green!60!black}{\cmark} & $ 5.93$ {\scriptsize $\pm\,0.19$} & $ 5.25$ {\scriptsize $\pm\,0.08$} & $\mathbf{4.46}$ {\scriptsize $\pm\,0.04$} & $12.70$ {\scriptsize $\pm\,0.04$} & $ 9.48$ {\scriptsize $\pm\,4.69$} & $ 9.89$ {\scriptsize $\pm\,4.17$} \\
1.50 & \textcolor{green!60!black}{\cmark} & $ 6.32$ {\scriptsize $\pm\,0.26$} & $ 5.37$ {\scriptsize $\pm\,0.54$} & $5.48$ {\scriptsize $\pm\,1.53$} & $20.25$ {\scriptsize $\pm\,1.56$} & $57.25$ {\scriptsize $\pm\,35.14$} & $13.92$ {\scriptsize $\pm\,0.46$} \\
\bottomrule
\end{tabular}
\end{table}

\section{Conclusion}
\label{sec:conc}




In this paper, we have analyzed the impact of stop gradient operators and adaptive normalization on flow map training. We have shown how the placement of stop gradients in the \meanflow and Lagrangian losses leads to a gradient update that, when optimized with \emph{gradient flow}, recovers the true flow map pointwise, in $L^\infty$, and in $L^2$ for \meanflow. 
In addition to preventing double backpropagation, \stopgradient operators provably move learned flow maps closer to the ground truth.
However, gradient descent iterations, when applied on the \meanflow and Lagrangian losses, can yield divergence. 
We demonstrate how adaptive normalization gives us both a theoretical and practical solution, enabling higher fidelity flow map learning on CIFAR-10 and FFHQ 64$\times$64 datasets.


\xhdr{Limitations and future directions} \looseness=-1
The scope of the results derived in this paper is limited to the losses discussed in \Cref{eq:mf_loss,eq:lagrangian_loss}. In particular, we do not show convergence for the original ESD and LSD objectives or the Terminal Velocity Matching objective \citep{boffiHowBuildConsistency2025,zhouTerminalVelocityMatching2026}.
Showing convergence for the gradient flow induced by any of these losses is future work.
In addition, our convergence results are derived in a specific functional setting limiting the applicability of the derived results to neural networks, whose parametrization impacts the functional gradient. 
Extending these results to such parametrized functions would require analyzing the PDEs under \emph{preconditioned} gradients. 
The main theory of this work is for the idealized gradient flow setting, where stochasticity-based challenges are ignored, 
as well as other tricks such as training with a fixed percentage of samples taken at the diagonal $s = t$. 
Quantifying the effects of these design choices is left for subsequent studies. 

\subsection*{Acknowledgments}
The authors thank the Blaise Pascal Center for the computational means. It uses the SIDUS \citep{SIDUS} solution. This work was performed using HPC resources from GENCI-IDRIS (Grant 2026-AD011017850).

\subsection*{AI use statement}


In this work, we used generative AI tools for brainstorming, implementing methods, rephrasing sentences, and creating scientific visuals (e.g., helping with TikZ code for plots). 
AI provided us with critical ingredients for proving mathematical claims and assisted with the checking of proofs.

We have not used generative AI tools for generating synthetic data sets, helping develop theoretical models or conceptual frameworks, formulating mathematical claims, assisting in the preliminary write-up of proofs, proposing or refining hypotheses, designing or providing feedback on research methodology or experiments, assisting with translation, cleaning and reformatting datasets, supporting qualitative and thematic data analysis, and interpreting results.

We have reviewed all AI-assisted work. We take responsibility for the final content of this work, including text, claims or artifacts produced with the aid of generative AI.

\bibliographystyle{abbrvnat}
\bibliography{references}

\appendix

\clearpage
\section{Additional Reminders}

\subsection{Fréchet Derivatives}
\label{app_sub_frechet}

Our goal is to minimize the objective in \Cref{eq:general_loss} over the infinite-dimensional function space $u \in \mathcal{U} \subset C^1$.
For such a loss, we can compute its Fréchet derivative with respect to $u$. Since $u_{\mathrm{tgt}}$ is taken under \stopgradient, it is considered constant as a function of $u$.
Hence, we can derive
\begin{align}
    D \mathcal L(u)[h] = \int_{x, s, t} (u(x, s, t) - u_\mathrm{tgt}(x, s, t)) \cdot h(x, s, t) \rho(x, s, t) \diff x \diff s \diff t
    \enspace.
\end{align}
This corresponds to the infinitesimal change in $\mathcal L$ when the input changes infinitesimally in the direction of $h$. The gradient of $\mathcal L$ at $u$ then points in the direction that increases $\mathcal L$ the fastest under an infinitesimal change to the input. Mathematically, this is written as $D \mathcal L(u)[h] = \langle \nabla \mathcal L(u), h \rangle$.
%
Importantly, the gradient \emph{depends on the choice of inner product} on the function space $\mathcal U$.
To formally define the gradient, we fix a choice of inner product, namely the $L^2(\mathbb R^d \times \Delta)$ inner product.

This choice of metric is natural and can be motivated by homogeneity. More precisely, a change in $u$ is felt equally irrespective of its position in space and time. Furthermore, it also does not result in spatial mixing, which is a central ingredient for the success of our analysis. It should be noted that neural network parametrizations do not naturally have this geometry.

Under this inner product, we see that $\nabla \mathcal L(u) = \rho(u - u_{\mathrm{tgt}})$.
%

\section{Main Proofs}
\label{sec:proofs}

\subsection{Proofs of PDE convergence}
\label{sub:pde_converge_proofs}

\begin{theorem}\label{eq:mf_pde_convergence_proof}
  Under the PDE in \Cref{eq_general_gradient_flow}, with $\mathcal{L} = \mathcal{L}^{\mathrm{MF}}$, under \Cref{ass_regularity_vel,ass_regularity_mf}, $u_\tau$ can be written down in closed form:
  \begin{align}
      u_\tau(x, s, t) = u_0(\Phistar(x, s, t_\tau), t_\tau, t) F^{-1}(\tau) + \frac{\Phistar(x, s, t_\tau) - x}{t - s},
  \end{align}
  where $t_\tau = t - (t - s) F^{-1}(\tau)$, and where $F^{-1}(\tau)$ is a decreasing function depending on $\rho$ that quantifies how close we are to $t$. This implies $u_\tau \to \ustar$ pointwise.
\end{theorem}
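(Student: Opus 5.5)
The plan is to reduce the \meanflow gradient flow to a homogeneous linear transport equation for the flow map $\Phi_\tau = x + (t-s)u_\tau$, and then solve that equation by the method of characteristics. The variable $t$ only enters as a parameter, so I fix $t$ throughout and work in the variables $(x,s,\tau)$.

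\emph{Step 1: rewrite the flow in terms of $\Phi$.} Set $g_\tau = (t-s)u_\tau = \Phi_\tau - x$.
Since $\partial_s g = -u + (t-s)\partial_s u$ and $\nabla_x g = (t-s)\nabla_x u$, we get
\[
(t-s)\bigl(\nabla_x u\, v + \partial_s u\bigr) = \partial_s g + u + \nabla_x g\, v .
\]
Substituting into \Cref{eq:mf_loss_grad}, the residual becomes
\[
u - u^{\mathrm{MF}}_{\mathrm{tgt}} = -\bigl(v + \partial_s g + \nabla_x g\, v\bigr) = -\bigl(\partial_s \Phi + \nabla_x\Phi\, v\bigr).
\]
Multiplying \Cref{eq_general_gradient_flow} by $(t-s)$, which does not depend on $\tau$, therefore gives the transport equation
\[
\partial_\tau \Phi_\tau = \rho\,(t-s)\bigl(\partial_s \Phi_\tau + \nabla_x \Phi_\tau\, v(x,s)\bigr).
\]
This equation has no source term. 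This is exactly the cancellation produced by the stop-gradient placement, and it is the Eulerian identity \Cref{eq:eulerian_id} appearing as the generator.

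\emph{Step 2: characteristics.} Along curves $(X(\sigma),S(\sigma))$ solving
\[
\frac{dS}{d\sigma} = -\rho(X,S,t)\,(t-S), \qquad \frac{dX}{d\sigma} = -\rho(X,S,t)\,(t-S)\,v(X,S),
\]
the quantity $\Phi_\sigma(X(\sigma),S(\sigma),t)$ is constant in $\sigma$. Existence and uniqueness of these curves follow from Picard--Lindel\"of, using the local Lipschitz assumptions in \Cref{ass_regularity_vel}.

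Now fix the endpoint $(X(\tau),S(\tau)) = (x,s)$ and run the curve backward to $\sigma = 0$. Then:
\begin{itemize}
\item $S$ increases towards $t$, and $t - S(\sigma) = (t-s)\exp\bigl(\int_\sigma^\tau \rho\bigr)$.
\item The ratio $dX/dS = v(X,S)$ shows that $X$ traces the true ODE trajectory, so $X(0) = \Phistar(x,s,t_\tau)$ with $t_\tau := S(0)$.
\end{itemize}
This yields $\Phi_\tau(x,s,t) = \Phi_0(\Phistar(x,s,t_\tau),t_\tau,t)$. Dividing by $t-s$ and writing $\Phi_0 = y + (t-t_\tau)u_0$ gives the claimed formula for $u_\tau$, with
\[
F^{-1}(\tau) := \frac{t-t_\tau}{t-s} = \exp\Bigl(-\int_0^\tau \rho\bigl(X(\sigma),S(\sigma),t\bigr)\,\diff\sigma\Bigr),
\]
which is decreasing in $\tau$ and depends on $\rho$ through the characteristic. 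Finally, I will verify that this function is indeed a solution: it is $C^1$ where $u_0$ is, and it satisfies the PDE by the chain rule along the characteristic flow.

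\emph{Step 3: convergence.} The backward characteristic stays on the compact arc $\{(\Phistar(x,s,r),r) : r\in[s,t]\}$. Since $\rho$ is continuous and positive, it is bounded below by some $c>0$ on this arc, so $F^{-1}(\tau) \le e^{-c\tau} \to 0$ and hence $t_\tau \to t$. Continuity of $u_0$ up to the boundary $s=t$ then gives
\[
\Phi_0(\Phistar(x,s,t_\tau),t_\tau,t) - \Phistar(x,s,t_\tau) = (t-t_\tau)\,u_0 \to 0 .
\]
Together with $\Phistar(x,s,t_\tau) \to \Phistar(x,s,t)$, this gives $u_\tau \to \ustar$ pointwise.

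\emph{Main obstacle.} The hardest part is regularity and uniqueness. Because $v$ is only locally Lipschitz, $\Phistar$ is Lipschitz but need not be $C^1$, so the characteristic formula may only be a Lipschitz (a.e. or characteristic) solution rather than a classical one. I would first handle this by interpreting the gradient flow along characteristics, where it is a family of ODEs in $\tau$ with unique solutions. If a classical statement is required, I would instead mollify $v$ and pass to the limit.
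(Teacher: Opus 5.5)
Your proposal is correct and essentially follows the paper's proof. It uses the same characteristic curves $(X,S)$; the only change is that you transport $\Phi_\tau$ through the homogeneous equation $\partial_\tau \Phi_\tau = \rho\,(t-s)\bigl(\partial_s \Phi_\tau + \nabla_x \Phi_\tau\, v\bigr)$ (which the paper itself records right after its proof), rather than integrating the damped equation $\tfrac{\diff}{\diff\tau}u_\tau(X,S,t) = -\rho\,(u_\tau - v)$ with the integrating factor $F$, and this spares you the integral of $v$ along the trajectory. One sign slip to fix: in Step 2 the identity should read $t - S(\sigma) = (t-s)\exp\bigl(-\int_\sigma^\tau \rho\bigr)$, which is the version your claim that $S$ increases toward $t$ and your formula for $F^{-1}(\tau)$ actually rely on.
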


\begin{proof}
The key idea is that this is a linear PDE, hence it may be solved exactly. We fix $t$ and introduce variables $X \in \mathbb{R}^{d}$ and $S \in [0, 1]$ satisfying
\begin{align*}
  \frac{\diff X}{\diff \tau}(\tau) &= - \rho(X, S, t) (t - S) v(X, S), \\
  \frac{\diff S}{\diff \tau}(\tau) &= - \rho(X, S, t) (t - S).
\end{align*}
Such $S$ and $X$ exist under the conditions given in \Cref{sub_sufficient_condition}. Evaluating the total derivative, we find that
\begin{align*}
  \frac{\diff}{\diff \tau} u_\tau(X, S, t) &= \frac{\partial}{\partial \tau} u_\tau(X, S, t) + \nabla_x u_\tau  \frac{\diff X}{\diff \tau} + \frac{\partial u_\tau}{\partial s} \frac{\diff S}{\diff \tau} \\
  &= - \rho(X, S, t) \left( u_\tau(X, S, t) - v(X, S) \right).
\end{align*}
Introducing an integrating factor $F$ satisfying $\diff F /  \diff \tau = \rho F$ and $F(0) = 1$, we may write this as
\[
\frac{\diff}{\diff \tau} \left( F u \right) = F(\tau) \rho(X, S, t) v(X, S).
\]
Prescribing boundary conditions $X(\mathcal{T}) = x$ and $S(\mathcal{T}) = s$, we find that $S(\tau) = t - (t - s) F(\tau) / F(\mathcal{T})$, and $S(0) = t - (t - s) F^{-1}(\mathcal{T}) := t_{\mathcal{T}}$. Moreover,
\begin{align*}
  \frac{\diff X}{\diff S} = \frac{\diff X}{\diff \tau} \frac{\diff \tau}{\diff S} = v(X, S),
\end{align*}
hence $X$ follows the velocity field along the characteristic curves. This yields $X(\tau) = \Phistar(x, s, S(\tau))$. Solving the ODE now shows that, for $s < t$,
\begin{align*}
  u_{\mathcal T}(x, s, t) &= u_0(\Phistar(x, s, t_{\mathcal{T}}), t_{\mathcal{T}}, t) F^{-1}(\mathcal{T}) + \int_{0}^{\mathcal{T}} \frac{F(\tau) \rho(X(\tau), S(\tau), t)}{F(\mathcal{T})} v(X(\tau), S(\tau)) \diff \tau \\
  &= u_0(\Phistar(x, s, t_{\mathcal{T}}), t_{\mathcal{T}}, t) F^{-1}(\mathcal{T}) + \frac{1}{t - s} \int_{s}^{t_{\mathcal{T}}} v(\Phistar(x, s, S), S) \diff S,
\end{align*}
which gives the closed form solution
\begin{align}
  u_\tau(x, s, t) = u_0 (\Phistar(x, s, t_{\tau}), t_{\tau}, t) F^{-1}(\tau) + \frac{\Phistar(x, s, t_{\tau}) - x}{t - s}. \label{eq:mf_closed_form}
\end{align}
In terms of $\Phi$, we obtain that
\begin{align}
    \Phi_\tau(x, s, t) &= x + (t - s) u_\tau(x, s, t) \\
    &= \Phistar(x, s, t_\tau) + u_0(\Phistar(x, s, t_\tau), t_\tau, t)\underbrace{ (t - s) F^{-1} (\tau)}_{t - t_\tau} \\
    &= \Phi_0(\Phistar(x, s, t_\tau), t_\tau, t).
\end{align}

Under mild conditions on $\rho$, for example if $\rho$ is bounded below on each trajectory $(x_r)_s^t$, which is true since $\rho > 0$ and each trajectory is a compact set, then $F^{-1}(\tau) \to 0$ as $\tau \to \infty$, since $\dot F \geq \rho_{\mathrm{min}}F \implies F(\tau) \geq e^{\rho_{\mathrm{min}}\tau}$. So, if $u_0$ is locally bounded at the diagonal (which is true since $u_0$ is continuous there), the first term disappears and
\begin{align}
  \lim_{\tau \to \infty} u_\tau(x, s, t) = \frac{\Phistar(x, s, t) - x}{t - s} = \ustar(x, s, t).
\end{align}

We have thus shown that ideal training under the \meanflow loss converges to the average velocity. Moreover, we have shown how the speed of convergence is dependent on a choice of weighting $\rho$.
\end{proof}

$\Phi_\tau$ can be shown to satisfy a similar PDE,
\begin{align}
  \frac{\partial}{\partial \tau} \Phi_\tau(x, s, t) = (t - s) \rho(x, s, t) \left( (\nabla_x \Phi_\tau) v(x, s) + \frac{\partial \Phi_\tau}{\partial s} \right),
\end{align}
which can be solved in an analogous manner.

\begin{theorem}\label{eq:lag_pde_convergence_proof}
  Under the PDE in \Cref{eq_general_gradient_flow}, with $\mathcal{L} = \mathcal{L}^{\mathrm{L}}$, under \Cref{ass_regularity_vel,ass_regularity_mf}, $\Phi_\tau$ can be written down in closed form:
  \begin{align}
            \Phi_\tau(x, s, t) = \Phi^\star(\Phi_0(x, s, s_\tau), s_\tau, t),
  \end{align}
  where $s_\tau = s + (t - s) F^{-1}(\tau)$, and where $F^{-1}(\tau)$ is a decreasing function depending on $\rho$ that quantifies how close we are to $s$. In particular if $\Phi^\star$ is continuous in $(x, s)$, then $\Phi_\tau$ converges pointwise to the true flow map as $\tau \to \infty$.
\end{theorem}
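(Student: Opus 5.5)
The plan is to rewrite the Lagrangian gradient flow as a PDE in the flow map $\Phi_\tau$ and solve it by characteristics, as in the proof of \Cref{eq:mf_pde_convergence_proof}. The difference from the MeanFlow case is that the transport now acts in the variable $t$, while $(x,s)$ stay frozen. That is why the apparent nonlinearity $v(x+(t-s)u,t)$ turns out to be harmless.

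\textbf{Step 1: the PDE for $\Phi_\tau$.} Substitute $u_\tau = (\Phi_\tau - x)/(t-s)$ into \Cref{eq:lagrangian_gradient}, using $(t-s)\,\partial_t u = \partial_t \Phi - u$. Multiplying the gradient flow by $(t-s)$, the $(\Phi-x)$ terms cancel and we get
\begin{align*}
\partial_\tau \Phi_\tau(x,s,t) = -(t-s)\,\rho(x,s,t)\bigl(\partial_t \Phi_\tau(x,s,t) - v(\Phi_\tau(x,s,t),t)\bigr).
\end{align*}
For fixed $(x,s)$, this is a first-order transport equation in $(\tau,t)$ with a source term depending only on the value of $\Phi$ itself. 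It is therefore semilinear, and no derivatives of $\Phi$ enter nonlinearly.

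\textbf{Step 2: characteristics.} Fix $(x,s)$ and let $T(\tau)$ solve $\frac{\diff T}{\diff \tau} = \rho(x,s,T)(T-s)$ with terminal condition $T(\mathcal T) = t$. Let $F$ be the integrating factor $\diff F/\diff\tau = \rho(x,s,T(\tau))F$ with $F(0)=1$. Then $T(\tau) - s = (t-s)F(\tau)/F(\mathcal T)$, so
\begin{align*}
T(0) = s + (t-s)F^{-1}(\mathcal T) =: s_{\mathcal T}.
\end{align*}
Along the curve, the total derivative is $\frac{\diff}{\diff\tau}\Phi_\tau(x,s,T(\tau)) = \rho(T-s)\,v(\Phi,T)$. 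Dividing by $\diff T/\diff\tau$ gives $\frac{\diff \Phi}{\diff T} = v(\Phi, T)$, so $\Phi$ follows the true ODE in the time variable $T$. Its initial value at $T = s_{\mathcal T}$ (i.e.\ $\tau=0$) is $\Phi_0(x,s,s_{\mathcal T})$. By uniqueness of ODE solutions ($v$ locally Lipschitz), $\Phi_{\mathcal T}(x,s,t) = \Phistar(\Phi_0(x,s,s_{\mathcal T}), s_{\mathcal T}, t)$. Since $F \geq 1$ and $F$ is increasing, $s_\tau$ decreases from $t$. Since $\rho>0$ is bounded below on the compact segment $\{x\}\times\{s\}\times[s,t]$, we have $F(\tau)\ge e^{\rho_{\min}\tau}$, hence $s_\tau \to s$.

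\textbf{Step 3: convergence.} As $\tau\to\infty$, $\Phi_0(x,s,s_\tau) = x + (s_\tau - s)u_0(x,s,s_\tau) \to x$, because $u_0$ is continuous, hence locally bounded, up to the diagonal by \Cref{ass_regularity_mf}. Continuity of $\Phistar$ in its initial point and time then gives $\Phi_\tau(x,s,t) \to \Phistar(x,s,t)$.

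\textbf{Main obstacle.} The delicate step is justifying the method of characteristics rigorously. This requires that backward characteristics from $(\mathcal T, t)$ reach $\tau=0$ inside $\Delta$ without blowing up, and that the $C^1$ solution of the PDE coincides with the function built along them. I would handle this as in \Cref{sub_sufficient_condition}. The characteristic ODE for $T$ is scalar with locally Lipschitz right-hand side and $T=s$ as a fixed point, so $T$ stays in $[s,t]$ and is globally defined. Conversely, one defines $\Phi_\tau$ by the closed form, checks that it is $C^1$ and satisfies the PDE, and invokes uniqueness for this semilinear transport equation.
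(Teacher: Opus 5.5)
Your proposal is correct and follows the paper's proof essentially step for step: you derive the same PDE $\partial_\tau\Phi_\tau=-(t-s)\rho\,(\partial_t\Phi_\tau-v(\Phi_\tau,t))$, use the same $t$-characteristics $\diff T/\diff\tau=(T-s)\rho$ with an integrating factor $F$, obtain the closed form from $\diff\Phi/\diff T=v(\Phi,T)$, and take the limit via $s_\tau\to s$ and continuity of $\Phistar$. Two of your steps make explicit what the paper only assumes (``assume $F$ is finite'', ``if $F^{-1}(\tau)\to 0$''): the compactness bound $F(\tau)\ge e^{\rho_{\mathrm{min}}\tau}$, and the fixed-point argument that characteristics never reach the diagonal; the converse existence/$C^1$ check you sketch at the end is not needed, since the statement already presupposes a solution $u_\tau$.
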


\begin{proof}
This is now a non-linear PDE, since $\Phi_\tau$ appears inside the velocity $v$. However, it can still be solved through characteristic methods. We first determine the PDE for $\Phi$ through the chain rule:
\begin{align*}
    \frac{\partial}{\partial \tau} \Phi_\tau(x ,s, t) &= (t - s) \frac{\partial}{\partial \tau} u_\tau(x, s, t) \\
    &= - (t - s) \rho(x, s, t) \left( \frac{\partial \Phi_\tau}{\partial t}(x, s, t) - v(\Phi_\tau(x, s, t), t) \right).
\end{align*}

Defining $\dot T =  (T - s) \rho(x, s, T)$, we have
\begin{align*}
  \frac{\diff}{\diff \tau} \Phi_\tau(x, s, T) = (T - s) \rho(x, s, T) v(\Phi_\tau(x, s, T), T).
\end{align*}
Let $F$ be an integrating factor for $\rho$ with $F(0) = 1$, and assume $F$ is finite. If $T(\mathcal{T}) = t$, we see that $T = s + (t - s) F(\tau) / F(\mathcal{T})$, and so $T(0) = s + (t - s)F^{-1}(\mathcal{T}) := s_{\mathcal{T}}$. Then
\begin{align*}
  \frac{\diff \Phi_\tau}{\diff T} = \frac{\diff \Phi_\tau}{\diff \tau} \frac{\diff \tau}{\diff T} = v(\Phi_\tau, T),
\end{align*}
so $\Phi_\tau$ flows along the velocity field $v$. Since at $\tau = 0$, $T(0) = s_\mathcal{T}$ and $\Phi_\tau(x, s, T) = \Phi_0(x, s, s_{\mathcal{T}})$, we get that
\begin{align}
  \Phi_\tau(x, s, t) = \Phistar(\Phi_0(x, s, s_{\tau}), s_{\tau}, t). \label{eq:lag_closed_form}
\end{align}
If $F^{-1}(\tau) \to 0$, then $s_\tau \to s$, and so $\Phi_0(x, s, s_\tau) \to x$ since $u_0$ is locally bounded. Therefore under the continuity of $\Phistar$, $\Phi_\tau \to \Phistar$ pointwise.
\end{proof}

\subsection{Convergence rates}
\label{sub:convergence_rates_proof}

\propCvgRateMf*

\begin{proof}
  For the \meanflow closed form, we have that
  \begin{align*}
    |\Phi_\tau^{\mathrm{MF}}(x, s, t) - \Phistar(x, s, t)| &= |u_0(\Phistar(x, s, t_\tau), t_\tau, t) (t - s)F^{-1}(\tau) + \Phistar(x, s, t_\tau) - \Phistar(x, s, t)| \\
    &\leq (t - s) F^{-1}(\tau) \left( \|u_0\|_{\infty} + \frac{1}{t - t_\tau} \int_{t_\tau}^{t} \|v(\Phistar(x, s, r), r)\|_\infty \diff r \right) \\
    &\leq 2M(t - s) F^{-1}(\tau).
  \end{align*}
  If $\rho \geq \rho_{\mathrm{min}}$, then $\dot F = F \rho \geq F \rho_{\mathrm{min}}$ and $F(0) = 1$ gives, by Gr\"onwall, $F(\tau) \geq e^{\rho_{\mathrm{min}} \tau}$, and hence $F^{-1}(\tau) \leq e^{-\rho_{\mathrm{min}}\tau}$. This bound is uniform over all $(x, s, t)$: taking a supremum over all $x, s, t$ finishes.
\end{proof}

A bound on the convergence for the Lagrangian solution requires stronger assumptions. The closed-form solution in \Cref{eq:lag_closed_form} shows the Lagrangian-trained flow map is $\Phi^\star(\Phi_0(x, s, s_\tau), s_\tau, t)$. To have this be close to $\Phi^\star(x, s, t) = \Phi^\star(\Phi^\star(x, s, s_\tau), s_\tau, t)$, we need $\Phi_0(x, s, s_\tau)$ close to $\Phi^\star(x, s, s_\tau)$, and a Lipschitz bound on the flow map depending on $\Phi^\star$.

Note that a Lipschitz bound on the flow map follows from a Lipschitz bound on the velocity: if $v$ is uniformly $L$-Lipschitz in $x$ for all $t$, then from Cauchy--Schwarz, we can obtain
\begin{align*}
\frac{\diff}{\diff t} |\Phi^\star(x, s, t) - \Phi^\star(y, s, t)| &\leq |v(\Phi^\star(x, s, t), t) - v(\Phi^\star(y, s, t), t)| \\
&\leq L|\Phi^\star(x, s, t) - \Phi^\star(y, s, t)|,  \\
|\Phi^\star(x, s, s) - \Phi^\star(y, s, s)| &= |x - y|,
\end{align*}
which tells us that the flow map is $e^{L(t - s)}$-Lipschitz for fixed $(s, t)$ by Gr\"onwall.

\begin{proof}
  First, we obtain a bound for the closeness of $\Phi_0(x, s, s_\tau)$ and $\Phistar(x, s, s_\tau)$, in a similar manner to above:
  \begin{align*}
    |\Phi_0(x, s, s_\tau) - \Phistar(x, s, s_\tau)| &\leq |\Phi_0(x, s, s_\tau) - x| + |\Phistar(x, s, s_\tau) - x| \\
    &\leq 2M(t - s) F^{-1}(\tau),
  \end{align*}
  from calculations similar to the above. Then the Lipschitzness of $\Phistar(\cdot, s_\tau, t)$ gives
  \[
  |\Phi_\tau^{\mathrm{L}}(x, s, t) - \Phistar(x, s, t)| \leq 2M (t - s) e^{L(t - s_\tau)} F^{-1}(\tau).
  \]
  Again, if $\rho \geq \rho_{\mathrm{min}}$, we get $F^{-1}(\tau) \leq e^{-\rho_{\mathrm{min}}\tau}$, which proves the claim.
\end{proof}

We can show that the $e^L$ factor cannot be removed. Take $\rho = 1$, and the velocity
\begin{align}
  v(x, t) =
  \begin{cases}
    1 &  x \geq 1/L, \\
    Lx & x \in [-1/L, 1/L], \\
    -1 & x \leq -1/L.
  \end{cases}
\end{align}
The corresponding flow map is $\Phi^\star(x, s, t) = e^{L(t - s)}x$ for $|x| < e^{-L}/L$. Suppose we have initialization $u_0(x, s, t) = 1$. Then the closed form gives that, for sufficiently large $\tau$,
\begin{align*}
  \Phi_\tau^{\mathrm{L}}(0, 0, 1) &= \Phi^\star(\Phi_0(0, 0, e^{-\tau}), e^{-\tau}, 1) = \Phi^\star(e^{-\tau}, e^{-\tau}, 1) \\
  &= e^{L(1 - e^{-\tau})} e^{-\tau}
\end{align*}
whereas $\Phi^\star(0, 0, 1) = 0$. This gives a separation of $e^{L} e^{-\tau}$, which is tight against our upper bound up to a constant. We can verify the \meanflow map is $\Phi_\tau^{\mathrm{MF}}(0, 0, 1) = \Phi_0(0, 1 - e^{-\tau}, 1) = e^{-\tau}$, which avoids the $e^{L}$ factor.

\xhdr{$L^2$ convergence} For the following proof, it will be useful to define the \emph{material derivative} operator $\mathcal D_v = (v \cdot \nabla + \partial_s)$. Then the \meanflow PDE is equivalent to
\begin{align}
    \partial_\tau u_\tau = - \rho( u_\tau - v - (t - s) \mathcal D_v u_\tau).
\end{align}
We then note that, for an arbitrary field $u$ (not necessarily satisfying the \meanflow PDE),
\begin{align*}
    \mathcal D_v u(\Phistar(z, 0, s), s) &= \nabla u(\Phistar(z, 0, s), s) v(\Phistar(z, 0, s), s)  + \partial_s u(\Phistar(z, 0, s), s) \\
    &= \nabla u (\Phistar(z, 0, s), s) \partial_s \Phistar(z, 0, s) + \partial_s u (\Phistar(z, 0, s), s) \\
    &= \frac{\diff}{\diff s} u(\Phistar(z, 0, s), s).
\end{align*}

\propRatesLtwo*

\begin{proof}
  We note that $u_\tau - \ustar$ follows the PDE
  \begin{align*}
    \frac{\partial}{\partial \tau} (u_\tau - \ustar) &= - \rho\left( u_\tau - v - (t - s) \mathcal{D}_v u_\tau \right) \\
    &= - \rho \left( u_\tau - \ustar + (t - s) \mathcal{D}_v \ustar - (t - s) \mathcal{D}_v u_\tau \right) \\
    &= - \rho (I - (t - s)\mathcal{D}_v)(u_\tau - \ustar).
  \end{align*}
  Now, we may differentiate under the integral sign, due to the regularity offered by \Cref{ass_integrability}:
  \begin{align*}
    \frac{\diff}{\diff \tau} &\int \frac{p_s(x)}{\rho(x, s, t)} \|u_\tau - \ustar\|^2 \diff x \diff s \diff t \\
    &= - \int 2p_s(x) (I - (t - s) \mathcal{D}_v)(u_\tau - \ustar) \cdot (u_\tau - \ustar) \diff x \diff s \diff t \\
    &= - \int p_s(x) (2I - (t - s) \mathcal{D}_v) \|u_\tau - \ustar\|^2 \diff x \diff s \diff t \\
    &= - 2\|u_\tau - \ustar\|^2_{L^2(p_s)} + \int p_s(x) (t - s) \mathcal{D}_v \|u_\tau - \ustar\|^2 \diff x \diff s \diff t.
  \end{align*}
  To analyze the last term we apply a change of variables $x = \Phistar(z, 0, s)$, noting that if $x \sim p_s$, then $z \sim p_0$:
  \begin{align*}
    \int p_s(x) &(t - s) \mathcal{D}_v \| u_\tau - \ustar\|^2 \diff x\diff s \diff t \\
    &= \int p_0(z) (t - s) \mathcal{D}_v \|u_\tau(\Phistar(z, 0, s), s, t) - \ustar(\Phistar(z, 0, s), s, t) \|^2 \diff z \diff s \diff t \\
    &= \int p_0(z) (t - s) \frac{\diff}{\diff s} \|u_\tau(\Phistar(z, 0, s), s, t) - \ustar(\Phistar(z, 0, s), s, t)\|^2 \diff s \diff z \diff t \\
    &= \int p_0(z) \left[  (t - s) \| u_\tau(\Phistar(z, 0, s), s, t) - \ustar(\Phistar(z, 0, s), s, t)\|^2 \right]_{s = 0}^{t} \diff z \diff t  \\
    & \qquad + \int p_0(z) \|u_\tau(\Phistar(z, 0, s), s, t) - \ustar(\Phistar(z, 0, s), s, t)\|^2 \diff z \diff s \diff t \\
    &= -\int t p_0(x) \|u_\tau(x, 0, t) - \ustar(x, 0, t)\|^2 \diff x \diff t + \|u_\tau - \ustar\|^2_{L^2(p_s)}.
  \end{align*}
  Combining with the other part gives
  \begin{align*}
    \partial_\tau \|u_\tau - \ustar\|^2_{L^2(p_s/\rho)} &= \frac{\partial}{\partial \tau} \int \frac{p_s(x)}{\rho(x, s, t)} \|u_\tau - \ustar\|^2 \diff x \diff s \diff t \\
    &= - \int p_s(x) \|u_\tau - \ustar\|^2 \diff x \diff s \diff t - \int t p_0(x) \|u_\tau - \ustar\|^2 \diff x \diff t \\
    &\leq - \rho_{\mathrm{min}} \int \frac{p_s(x)}{\rho(x, s, t)} \|u_\tau - \ustar\|^2 \diff x \diff s \diff t = - \rho_{\mathrm{min}} \|u_\tau - \ustar\|^2_{L^2(p_s/\rho)}.
  \end{align*}
  Hence we have that $\|u_\tau - \ustar\|^2_{L^2(p_s/\rho)} \leq e^{-\rho_{\mathrm{min}}\tau} \|u_0 - \ustar\|^2_{L^2(p_s/\rho)}$.
\end{proof}

Although the assumption that $\rho \geq \rho_{\mathrm{min}} > 0$ cannot hold for a density over $\mathbb R^d$, the same inequality holds when integrating over a compact domain $\{ (\Phistar(x, a, s), s, t) \mid x \in K, a \leq s \leq t \leq b\}$. Moreover if $\rho \leq \rho_{\mathrm{max}} < \infty$, then $\|u_\tau - \ustar\|^2_{L^2(p_s/\rho)} \geq \rho_{\mathrm{max}}^{-1} \|u_\tau - u^\star\|^2_{L^2(p_s)}$, hence $u_\tau \to u^\star$ in $L^2(p_s)$.

\subsection{Sufficient conditions for convergence}
\label{sub_sufficient_condition}

We now state sufficient conditions under which the above proofs are valid. The closed-form and $L^\infty$ proofs rely on the local existence and uniqueness of characteristics $X, S$ and $T$, and the application of the chain rule. This follows from the Picard--Lindel\"of theorem under certain assumptions on the velocity $v$ and the density $\rho$:
\begin{itemize}
    \item $v$ is locally Lipschitz in $x, s$.
    \item $\rho$ is locally Lipschitz in $x, s, t$.
\end{itemize}
This gives local existence and uniqueness. We can extend these local characteristics until they either approach a boundary or $X \to \infty$. The latter cannot happen if the true flow $\Phistar$ is well-defined everywhere. We analyze conditions under which the former is avoided below.

The above derivations only work if the quantity $F$ does not diverge in finite time, otherwise the characteristics hit the boundary $s = t$, where the PDE is unspecified. If $\rho$ is upper bounded by $M$, then this holds, since $F$ satisfies the ODE $\dot F = \rho F \leq M F$ and $F(0) = 1$, giving $F(\tau) \leq e^{M \tau} < \infty$.




If we suspend the assumption that $\rho$ must be locally Lipschitz everywhere, the following proposition characterizes the choices of $\rho$ for which the characteristics satisfy $s < t$ for all $\tau$.

\begin{proposition}
  Suppose that $\rho(x, s, t) \asymp g(t - s)$, where $1/g$ is bounded outside of a neighborhood of $0$ and $\rho$ is bounded below on trajectories. Then $F(\tau) < \infty$ for all $\tau$ if and only if
  \begin{align}
    \int_{0}^{1} \frac{1}{r g(r)} \diff r = \infty.
  \end{align}
\end{proposition}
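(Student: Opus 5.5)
We reduce the question to the scalar ODE governing the time coordinate along a characteristic. In the MeanFlow construction the characteristic gap $r(\tau) := t - S(\tau)$ satisfies $\dot r = \rho(X,S,t)\, r$; in the Lagrangian construction $r := T - s$ satisfies $\dot r = \rho(x,s,T)\, r$. In both cases $r = r_0 F(\tau)/F(\mathcal T)$, with $F$ the integrating factor $\dot F = \rho F$. Reading the characteristic backwards in $\tau$, $r$ shrinks toward $0$, so the characteristic approaches the diagonal. The condition $F(\tau)<\infty$ for all $\tau$ is therefore equivalent to the statement that the characteristic ODE for the gap, run in the direction in which $r$ decreases, takes infinite $\tau$-time to reach $r=0$, i.e.\ it does not hit the boundary $s=t$ in finite optimization time.

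Next we use $\rho \asymp g(t-s)$, meaning there are constants $0<c_1\le c_2$ with $c_1 g(r)\le \rho \le c_2 g(r)$ along the characteristic. Then $\dot r$ lies between $c_1 g(r) r$ and $c_2 g(r) r$. Separating variables, the $\tau$-time needed to move the gap from $r_0$ down to $\epsilon$ is comparable, up to the constants $c_1^{-1}, c_2^{-1}$, to
\[
\int_\epsilon^{r_0} \frac{\diff r}{r\, g(r)}.
\]
Formally, we apply the comparison principle to $G(r) := \int_{r}^{r_0} \frac{\diff r'}{r' g(r')}$, which is monotone in $r$, and obtain $c_1 \,\Delta\tau \le G(\epsilon) \le c_2\,\Delta\tau$.

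\textbf{Equivalence.} If the integral diverges at $0$, then $G(\epsilon)\to\infty$ as $\epsilon \to 0$, so reaching $r=0$ takes infinite time, $r>0$ for all finite $\tau$, and $\log F(\tau) = \int_0^\tau \rho$ stays finite. Conversely, if the integral converges, the characteristic reaches $r=0$ at some finite time $\tau^\ast \le G(0)/c_1$. Since $F(\tau)/F(\mathcal T) = r(\tau)/r_0$, this forces $F$ to blow up, i.e.\ $F(\tau)=\infty$, at a finite $\tau$.

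\textbf{Role of the hypotheses.} Convergence or divergence of $\int_0^1$ is governed only by the behavior near $r=0$. The assumption that $1/g$ is bounded away from $0$ guarantees that $\int_\delta^1 \frac{\diff r}{r g(r)}$ is finite, so only the small-$r$ part matters and no blow-up or stall occurs at large gap. The assumption that $\rho$ is bounded below on trajectories ensures that the dynamics actually move $r$ at all, so the gap cannot freeze at a positive value, which would make $F$ trivially finite for a spurious reason.

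\textbf{Main obstacle.} The delicate step is making the two-sided comparison rigorous when $\rho$ also depends on $X$ through $\asymp$. We must check that the constants in $\asymp$ are uniform along the whole characteristic, not just pointwise. We must also handle the fact that the dynamics run toward the singular boundary where $\rho$ need not be Lipschitz. For that, we would work on $\{r\ge\epsilon\}$, where the ODE is well posed, and then let $\epsilon\to0$.
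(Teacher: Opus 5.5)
Your proposal is correct and follows essentially the same route as the paper. The paper also reverses the characteristic so the gap obeys $\dot R=-\rho R$ from $R(0)=t-s$, and writes the hitting time of $R=0$ as $\int_0^{t-s}\frac{\mathrm{d}r}{r\rho}$. It then compares this with $\int_0^{t-s}\frac{\mathrm{d}r}{r g(r)}$ via $\rho\asymp g$, and uses boundedness of $1/g$ away from $0$ to pass to $\int_0^1$; your explicit constants $c_1,c_2$ and truncation at $r\ge\epsilon$ only make that same argument more careful.
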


\begin{proof}
  Note that showing $F(\mathcal{T}) < \infty$ is equivalent to showing $S(0) < t$ or $T(0) > s$ for any choice of $\mathcal{T}$, depending on whether we are in the Eulerian or Lagrangian frame. Define a new variable $R(\tau)$ by
  \begin{align}
    R(\tau) =
    \begin{cases}
      t - S(\mathcal{T} - \tau) & \text{under the Eulerian loss}, \\
      T(\mathcal{T} - \tau) - s & \text{under the Lagrangian loss}.
    \end{cases}
  \end{align}
  Then we can note that in both formulations, $\dot R = -\rho(X, S, T) R$ and $R(0) = t - s$. Letting $\tau_0$ be the time until $R$ hits $0$ (which is possibly $\infty$), we obtain that
  \[
  \int_{0}^{t - s} \frac{1}{r \rho(r)} \diff r = \tau_0.
  \]
  Assuming that $\rho(X, S, T) \asymp g(T - S) = g(R)$ gives that
  \[
  \tau_0 \asymp \int_{0}^{t - s} \frac{1}{r g(r)} \diff r,
  \]
  hence $\tau_0$ is finite if and only if the integral is finite, noting that $0 < t - s \leq 1$, and the integral from $t-s$ to $1$ is finite by the boundedness of $g^{-1}$ away from $0$.
\end{proof}

Choosing $g(r) = 1$ ensures convergence while having $\rho$ remain bounded.

\subsection{Gradient descent proofs}
\label{sub:adaptive_gradient_descent_proof}

For the gradient descent iterates $u_k$, let $\Phi_k(x, s, t) = x + (t - s) u_k(x, s, t)$.

\propCounterExample*

\begin{proof}
  We first write down the recurrence in terms of the errors $e_k = u_k - \ustar$, using the fact that $\ustar - v - (t - s) \mathcal D_v \ustar = 0$, and $\ustar - v(\Phistar) + (t-s)\partial_t \ustar = 0$. For the \meanflow loss,
  \begin{align}
      e_{k+1} = e_k - \eta \rho(e_k - (t - s) \mathcal D_ve_k),
  \end{align}
  and for the Lagrangian loss,
  \begin{align}
      e_{k+1} = e_k - \eta \rho[e_k + (t - s) \partial_t e_k - (v(\Phi_k, t) - v(\Phistar, t))].
  \end{align}

  We give an example of a small error blowing up. Consider $\rho = 1$ and
  \begin{align}
    e = e_0 = \varepsilon (t - s)^{m} a
  \end{align}
  for some value $m \in \mathbb N$ with $\eta (m+1) > 2$ and a constant vector $a$ with $\|a\| = 1$. Then $-(t - s) \mathcal{D}_v e = (t - s) \partial_t e = m \varepsilon (t - s)^{m} a = m e$, hence $e_{k+1} = (1 - \eta(m+1))e_k$ by induction, for any velocity $v$ under the \meanflow loss, and for $v = 0$ under the Lagrangian loss. So $e_k = \varepsilon(1 - \eta(m+1))^{k} (t - s)^{m} a$. Since $\eta > 2(m+1)^{-1}$, this diverges as $k \to \infty$.

  To construct $e_0 \in L^2(\rho)$, we can simply multiply by a smooth cutoff function which is equal to one on a neighborhood of a compact set $K$. Since this update is local, on $K$ we will still have $e_k = \varepsilon(1 - \eta (m+1))^k (t - s)^m a$, and hence still obtain divergence in both $L^2(\rho)$ and $L^\infty$.

  We now generalize our construction to smooth $\rho$. We pick some point $(x, s, t)$ on which $\rho > 0$. By continuity, there is a neighborhood around this point on which $\rho$ is positive.

  Within this neighborhood, we choose $V = \{(\Phistar(x, r, s), s, t) \mid x \in K, s \in [s_1, s_2], t\in[t_1, t_2]\}$, with $t_1 > s_2$, for some fixed reference time $r < s_1$. We would like to find an initial error $e_0$ such that
  \begin{align}
      \rho (e_0 - (t - s) \mathcal D_v e_0) = m e_0 \implies \mathcal D_v e_0(x,s,t) = \frac{1-m/\rho}{(t-s)}e_0(x, s, t),
  \end{align}
  at least in some neighborhood. Consider choosing $e_0(x, s, t)$ to be constant for all $(x, s, t) \in \{(\Phistar(x, r, s_1), s_1, t) \mid x \in K, t \in [t_1, t_2]\}$. Then, we may define $e_0$ implicitly on the rest of $V$ since
  \begin{align}
      \mathcal D_v e_0(\Phistar(x, r, s), s, t) = \frac{\diff}{\diff s} e_0(\Phistar(x, r, s), s, t).
  \end{align}
  It can also be defined explicitly through
  \begin{align}
      e_0(\Phistar(x, r, s), s, t) = C_\varepsilon\exp \left( \int_{s_1}^s \frac{1 - m/\rho(\Phistar(x, r, s'), s', t)}{t - s'} \diff s'\right) a,
  \end{align}
  where $a$ is a constant vector with norm 1, and $C_\varepsilon$ is carefully chosen to bound the $L^\infty$ norm of $e_0$ on $V$ by $\varepsilon$. This suffices as an example of $e_0$ satisfying $\rho(e_0 - (t - s) \mathcal D_v e_0) = m e_0$, at least within $V$. Hence we obtain that $e_{k+1} = (1 - \eta m) e_{k}$, and so if $\eta > \frac 2m$, then this diverges within $V$.

  Again, we may make this well-defined by multiplying by a cutoff function that is one on a neighborhood of a compact subset $K$ of $V$, but now we must also assume this cutoff is $0$ on the exterior of some open subset of $V$, so that the local $e_0$ counterexample above extends to a globally defined counterexample. Moreover $e_0$ must be non-zero somewhere, so we obtain that its iterates must diverge in both $L^\infty$ and $L^2$ norms.

  A similar proof works for the Lagrangian loss with $v = 0$, which we do not detail here.
\end{proof}

This counterexample for the \meanflow loss is independent of the flow map we are trying to learn. The phenomenon that increasingly small $\eta$ is required to fully learn all modes of the solution is due to the inherent \emph{stiffness} of the \meanflow and Lagrangian PDEs.

\propNormalizedConvergence*

\begin{proposition}
  Assume \Cref{ass_regularity_vel,ass_regularity_mf} and the discrete equivalent of \Cref{ass_integrability} hold. Consider the update scheme given in \Cref{eq:normalized_update}. Assume that $p_s$ is bounded above: $p_s(x) \leq p_{\mathrm{max}} < \infty$. Suppose $\|u_0 - \ustar\|^2_{L^2(p_s/\rho)} < \infty$. Then for $p \geq 1/2$, the updates have uniformly bounded norms:
  \begin{align}
      \|u_{k+1} - u_k\|^2_{L^2(p_s/\rho)} \leq M < \infty.
  \end{align}
\end{proposition}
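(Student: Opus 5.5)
The bound follows by computing the update norm directly. By \Cref{eq:normalized_update}, the increment is $u_{k+1}-u_k = -\eta\rho\, r_k/(\|r_k\|^2_{L^2(\rho)}+c)^p$. Its squared norm in $L^2(p_s/\rho)$ is therefore
\begin{align*}
\|u_{k+1}-u_k\|^2_{L^2(p_s/\rho)} = \frac{\eta^2}{(\|r_k\|^2_{L^2(\rho)}+c)^{2p}} \int \frac{p_s}{\rho}\,\rho^2 \|r_k\|^2 \diff x\diff s\diff t = \frac{\eta^2}{(\|r_k\|^2_{L^2(\rho)}+c)^{2p}} \int p_s\,\rho\,\|r_k\|^2 \diff x\diff s\diff t .
\end{align*}
The assumption $p_s\le p_{\mathrm{max}}$ bounds the remaining integral by $p_{\mathrm{max}}\|r_k\|^2_{L^2(\rho)}$. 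This is the key observation: the weight $p_s/\rho$ is designed so that one factor of $\rho$ cancels, leaving exactly the $L^2(\rho)$ norm that appears in the normalization.

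Writing $a:=\|r_k\|^2_{L^2(\rho)}\ge 0$, we get
\begin{align*}
\|u_{k+1}-u_k\|^2_{L^2(p_s/\rho)} \le \eta^2 p_{\mathrm{max}}\, \frac{a}{(a+c)^{2p}}.
\end{align*}
It remains to bound $g(a)=a/(a+c)^{2p}$ uniformly over $a\ge 0$, which is where $p\ge 1/2$ enters. Since $a\le a+c$, we have $g(a)\le (a+c)^{1-2p}$. Because $1-2p\le 0$ and $a+c\ge c$, this gives $g(a)\le c^{1-2p}$. Hence
\begin{align*}
M=\eta^2 p_{\mathrm{max}}\, c^{1-2p},
\end{align*}
independent of $k$. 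For $p<1/2$, by contrast, $g(a)\to\infty$ as $a\to\infty$. This is consistent with the unbounded residuals of \Cref{prop_counter_example}.

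\textbf{Technical points.} The main issue is ensuring every quantity is finite at each step $k$: $r_k$ must lie in $L^2(\rho)$ so the normalization is well defined, and the integrals must make sense. If $\|r_k\|^2_{L^2(\rho)}=\infty$, the update is interpreted as zero or excluded. Otherwise, finiteness of the norms is supplied by the discrete analogue of \Cref{ass_integrability} together with $\|u_0-\ustar\|^2_{L^2(p_s/\rho)}<\infty$, propagated by induction. Since the bound itself needs only $p_s\le p_{\mathrm{max}}$ and $p\ge 1/2$, no further obstacle is expected.
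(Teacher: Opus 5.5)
Your proposal is correct and follows the paper's proof essentially step for step: the same cancellation of one factor of $\rho$, the bound $p_s \le p_{\mathrm{max}}$, and the reduction to uniform boundedness of $a/(a+c)^{2p}$ on $a \ge 0$. Your finish $a/(a+c)^{2p} \le (a+c)^{1-2p} \le c^{1-2p}$ is slightly cleaner than the paper's argument (continuity on $[0,1]$ plus $f \le 1$ for $a \ge 1$), and it gives the explicit constant $M = \eta^2 p_{\mathrm{max}} c^{1-2p}$.
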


\begin{proof}
    Note that
    \begin{align}
        \|u_{k+1} - u_k\|^2_{L^2(p_s/\rho)} &= \left\|\eta \rho \frac{r_k}{(\|r_k\|^2_{L^2(\rho)} + c)^p} \right\|^2_{L^2(p_s/\rho)} \\
        &= \eta^2 \int_{\mathbb R^d \times \Delta} \frac{p_s}{\rho} \rho^2  \frac{\|r_k\|^2}{(\|r_k\|^2_{L^2(\rho)} + c)^{2p}} \diff x \diff s \diff t \\
        &\leq \eta^2 \frac{p_{\mathrm{max}}}{(\|r_k\|^2_{L^2(\rho)} + c)^{2p}} \int_{\mathbb R^d \times \Delta} \rho \|r_k\|^2 \diff x \diff s \diff t \\
        &= \eta^2 p_{\mathrm{max}} \frac{\|r_k\|^2_{L^2(\rho)}}{(\|r_k\|^2_{L^2(\rho)} + c)^{2p}}.
    \end{align}
    The function $f(x) = \frac{x}{(x + c)^{2p}}$ on $x \geq 0$ is bounded for $p \geq 1/2$, since it is continuous with $f(0) = 0$ and $f(x) \leq 1$ for $x \geq 1$.
\end{proof}


\begin{proposition}
  Assume \Cref{ass_regularity_vel,ass_regularity_mf} and the discrete equivalent of \Cref{ass_integrability} hold. Consider the update scheme given in \Cref{eq:normalized_update}, with $p = 1.0$. Assume that on a subset $K^{+} = \{(\Phi^\star(x, a, s), s, t) : x \in K, a \leq s \leq t \leq b\}$, we have that $p_s$ is bounded above: $p_s(x) \leq p_{\mathrm{max}} < \infty$, and $\rho$ is bounded below: $0 < \rho_{\mathrm{min}} \leq \rho$. Then we obtain
  \begin{align}
      \|u_{k+1} - \ustar\|^2_{L^2(p_s/\rho)} \leq \max(\|u_{k} - \ustar\|^2_{L^2(p_s/\rho)}, \eta p_{\mathrm{max}} (\rho_{\mathrm{min}}^{-1} + \eta c^{-1})).
  \end{align}
    Under the assumption that there exists $M$ for which $\|r_k\|^2_{L^2(\rho)} \leq M(k+1)$ for each $k$, for all $k$ large enough we obtain $\|u_{k} - \ustar\|^2_{L^2(p_s/\rho)} \leq \eta p_{\mathrm{max}} (\rho_{\mathrm{min}}^{-1} + \eta c^{-1})$.
\end{proposition}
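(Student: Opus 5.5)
The plan is to work with the error $e_k = u_k - \ustar$ and the squared norm $D_k := \|e_k\|^2_{L^2(p_s/\rho)}$, computed on $K^{+}$, and to derive a one-step recursion for $D_k$.

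\textbf{Step 1: the error recursion.} As in the proof of \Cref{prop_counter_example}, the identity $\ustar - v - (t-s)\mathcal D_v \ustar = 0$ gives $r_k = (I - (t-s)\mathcal D_v) e_k$. With $\alpha_k := \eta / (\|r_k\|^2_{L^2(\rho)} + c)$, the update \Cref{eq:normalized_update} with $p=1$ reads $e_{k+1} = e_k - \alpha_k \rho r_k$. Expanding the square gives
\begin{align*}
D_{k+1} = D_k - 2\alpha_k \int p_s\, e_k\cdot r_k \,\diff x\diff s\diff t + \alpha_k^2 \int p_s \rho \|r_k\|^2 \,\diff x\diff s\diff t .
\end{align*}

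\textbf{Step 2: the cross term.} I will reuse the computation from the proof of \Cref{prop_rates}: change variables $x = \Phistar(z,0,s)$, use $\mathcal D_v = \frac{\diff}{\diff s}$ along trajectories, and integrate by parts in $s$. This should give
\begin{align*}
\int p_s\, e_k\cdot r_k = \tfrac12\|e_k\|^2_{L^2(p_s)} + \tfrac12\int t\, p_0 \|e_k(x,0,t)\|^2 \ge \tfrac12 \|e_k\|^2_{L^2(p_s)} \ge \tfrac{\rho_{\min}}{2} D_k .
\end{align*}
The boundary term at $s=t$ vanishes because of the factor $(t-s)$, and the one at $s=0$ has a favourable sign. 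The discrete integrability assumption (with all integrals restricted to $K^{+}$) justifies these manipulations. I expect this step to be the main technical obstacle: one has to check that the boundary contributions on the restricted domain $K^{+}$ have the right sign.

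\textbf{Step 3: the quadratic term and the max bound.} Using $p_s \le p_{\max}$,
\begin{align*}
\alpha_k^2 \int p_s\rho\|r_k\|^2 \le \alpha_k^2 p_{\max}\|r_k\|^2_{L^2(\rho)} = \alpha_k \eta p_{\max}\frac{\|r_k\|^2_{L^2(\rho)}}{\|r_k\|^2_{L^2(\rho)}+c} \le \alpha_k\eta p_{\max}.
\end{align*}
Combining with Step 2 gives
\begin{align*}
D_{k+1} \le D_k - \alpha_k(\rho_{\min} D_k - \eta p_{\max}).
\end{align*}
Set $T_1 = \eta p_{\max}/\rho_{\min}$ and $T = \eta p_{\max}(\rho_{\min}^{-1} + \eta c^{-1})$. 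If $D_k \ge T_1$, then $D_{k+1} \le D_k$. If $D_k < T_1$, then $D_{k+1} \le D_k + \alpha_k \eta p_{\max} \le T_1 + \eta^2 p_{\max}/c = T$, using $\alpha_k \le \eta/c$. In either case $D_{k+1} \le \max(D_k, T)$, which is the first claim.

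\textbf{Step 4: eventual entry into the ball.} Since $T \ge T_1$, the two cases of Step 3 show that $D_k \le T$ implies $D_{k+1} \le T$, so the set $\{D \le T\}$ is invariant. Suppose instead that $D_k > T$ for all $k$. Then $\rho_{\min} D_k - \eta p_{\max} \ge \rho_{\min}(T - T_1) = \rho_{\min}\eta^2 p_{\max}/c > 0$. Each step therefore decreases $D_k$ by at least $\alpha_k \rho_{\min}\eta^2 p_{\max}/c$. The growth assumption $\|r_k\|^2_{L^2(\rho)} \le M(k+1)$ gives $\alpha_k \ge \eta/(M(k+1)+c)$, whose sum over $k$ diverges. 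Summing the decrements would then force $D_k \to -\infty$, contradicting $D_k \ge 0$. Hence $D_k \le T$ for all large $k$, which yields the stated $\limsup$ bound.
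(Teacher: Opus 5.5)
Your proposal is correct and matches the paper's proof step for step: the same error recursion $e_{k+1}=e_k-\alpha_k\rho r_k$, the same expansion of the square, and the same change of variables along characteristics plus integration by parts. On $K^{+}$ the latter gives $\int p_s\, e_k\cdot r_k = \tfrac12\|e_k\|^2_{L^2(p_s)} + \tfrac12\int (t-a)\,p_a\|e_k(x,a,t)\|^2\,\diff x\,\diff t$, with the boundary term at $s=a$ (and $p_a$ in place of $p_0$) having exactly the favourable sign you anticipated; you then use the same bound on the quadratic term and the same two-case analysis. The only difference is the final step, where you use a uniform additive decrement $\alpha_k\rho_{\mathrm{min}}\eta^2p_{\mathrm{max}}/c$ outside the ball of squared radius $T$, whereas the paper shows that the excess $D_k-\eta p_{\mathrm{max}}/\rho_{\mathrm{min}}$ contracts by the factor $1-\alpha_k\rho_{\mathrm{min}}$ (which gives an entry time logarithmic rather than linear in $D_0$); both rely only on $\sum_k 1/(\|r_k\|^2_{L^2(\rho)}+c)=\infty$.
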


\begin{proof}
  We consider all norms over $K^+$. Note that
  \begin{align}
    e_{k+1} = e_{k} - \eta \rho \frac{r_{k}}{\|r_{k}\|^2_{L^2(\rho)} + c},
  \end{align}
  so with $\mathbf x = (x, s, t)$, we have that
  \begin{align}
    \int \frac{p_s}{\rho} \|e_{k+1}\|^2 \diff \mathbf{x} &= \int \frac{p_s}{\rho} \|e_{k}\|^2 \diff \mathbf{x} - 2 \eta \int \frac{p_s}{\|r_{k}\|^2_{L^2(\rho)} + c} e_{k} \cdot r_{k} \diff \mathbf{x} \\
    & \qquad + \eta^2 \int p_s \rho \frac{\|r_k\|^2}{(\|r_k\|^2_{L^2(\rho)} + c)^2} \diff \mathbf{x}.
  \end{align}
  Under the assumption that $p_s \leq p_{\mathrm{max}}$, the final term can be bounded as $\eta^2 p_{\mathrm{max}} / (\|r_k\|^2_{L^2(\rho)} + c)$. For the former, we can use a change of variables, the expansion $r_k = u_k - v - (t - s) \mathcal D_v u_k = e_k - (t - s) \mathcal D_v e_k$, and integration by parts to notice that
  \begin{align*}
    \int_{K^{+}} p_s e_{k} \cdot (e_{k} - (t - s) \mathcal{D}_v e_{k}) \diff \mathbf{x} &= \int p_s \|e_{k}\|^2 \diff \mathbf{x} - \frac{1}{2} \int (t - s) p_a \frac{\diff}{\diff s} \|e_{k}\|^2 \diff \mathbf{x} \\
    &= \frac{1}{2} \int p_s \|e_{k}\|^2 \diff \mathbf{x} + \frac{1}{2} \int (t - a) p_a \|e_{k}\|^2 \diff x \diff t.
  \end{align*}
  Hence we get the inequality
  \begin{align*}
    \|e_{k+1}\|^2_{L^2(p_s/\rho)}- \|e_k\|^2_{L^2(p_s/\rho)} &\leq  - \frac{\eta}{\|r_{k}\|^2_{L^2(\rho)} + c} \left( \int p_s \|e_{k}\|^2 \diff \mathbf{x} - \eta p_{\mathrm{max}} \right) \\
    &\leq - \frac{\eta}{\|r_{k}\|^2_{L^2(\rho)} + c} \left( \rho_{\mathrm{min}} \int \frac{p_s}{\rho} \|e_{k}\|^2 \diff \mathbf{x} - \eta p_{\mathrm{max}} \right) \\
    &\leq - \frac{\eta}{\|r_{k}\|^2_{L^2(\rho)} + c} \left( \rho_{\mathrm{min}} \|e_k\|^2_{L^2(p_s/\rho)} - \eta p_{\mathrm{max}} \right).
  \end{align*}
  In particular, if $\|e_k\|^2_{L^2(p_s/\rho)} \geq \eta \frac{p_{\mathrm{max}}}{\rho_{\mathrm{min}}}$, then $\|e_{k+1}\|^2_{L^2(p_s/\rho)} \leq \|e_k\|^2_{L^2(p_s/\rho)}$. If $\|e_k\|^2_{L^2(p_s/\rho)} \leq \eta \frac{p_{\mathrm{max}}}{\rho_{\mathrm{min}}}$, then
  \begin{align*}
    \|e_{k+1}\|^2_{L^2(p_s/\rho)} \leq \|e_k\|^2_{L^2(p_s/\rho)} + \frac{\eta^2 p_{\mathrm{max}}}{\|r_{k}\|^2_{L^2(\rho)} + c} \leq \eta p_{\mathrm{max}} \left( \frac{1}{\rho_{\mathrm{min}}} + \frac{\eta}{c} \right).
  \end{align*}
  Therefore, once we enter the ball with squared radius $\eta p_{\mathrm{max}} (\rho_{\mathrm{min}}^{-1} + \eta c^{-1}) = \mathcal{O}(\eta)$, we always stay within it.

  Let $\|e_k\|^2_{L^2(p_s/\rho)} = T_{k} + \eta p_{\mathrm{max}} \rho_{\mathrm{min}}^{-1}$. Then,
  \begin{align*}
    T_{k+1} \leq T_{k} \left( 1 - \frac{\eta \rho_{\mathrm{min}}}{\|r_{k}\|^2_{L^2(\rho)} + c} \right).
  \end{align*}
  Suppose $T_k \geq 0$ for all $k$, otherwise we have already entered the energy ball. Then, we must have $1 - \frac{\eta \rho_{\mathrm{min}}}{\|r_{k}\|^2_{L^2(\rho)} + c} \geq 0$, and so through telescoping, we can derive the inequality
  \begin{align*}
    T_{k+1} &\leq T_0 \prod_{i = 0}^{k} \left( 1 - \frac{\eta \rho_{\mathrm{min}}}{\|r_{i}\|^2_{L^2(\rho)} + c} \right) \leq T_{0} \prod_{i = 0}^{k} \exp \left( -  \frac{\eta \rho_{\mathrm{min}}}{\|r_{i}\|^2_{L^2(\rho)} + c} \right) \\
    &= T_{0} \exp \left( - \eta \rho_{\mathrm{min}} \sum_{i = 0}^{k} \frac{1}{\|r_{i}\|^2_{L^2(\rho)} + c} \right).
  \end{align*}
  Hence under the assumption
  \begin{align*}
    \sum_{i = 0}^{\infty} \frac{1}{\|r_{i}\|^2_{L^2(\rho)} + c} = \infty,
  \end{align*}
  we will have that $T_{k} \to 0$ as $k \to \infty$. Hence the squared norm is confined to a ball of squared radius $\mathcal{O}(\eta)$. Finally, this condition holds true if $\|r_k\|^2_{L^2(\rho)} \leq M (k+1)$ for all $k$.
\end{proof}

This says that we converge under the assumption that the \emph{effective total step size}
  \begin{align*}
    \sum_{i = 0}^{\infty} \frac{1}{\|r_{i}\|^2_{L^2(\rho)} + c}
  \end{align*}
is infinite.

For $p > 1$ it is likely possible to formulate similar convergence guarantees; however the assumptions on the residual norms will be stricter.

\subsection{Equivalence to Align Your Flow}
\label{sub:ayf_equiv}

We show that \Cref{eq:mf_loss,eq:lagrangian_loss} have the same gradient update as the losses in Align Your Flow \citep{sabourAlignYourFlow2025}. This is already proven for the AYF-EMD objective and the \meanflow loss in Section D.5 of \citet{sabourAlignYourFlow2025}.

For the AYF-LMD objective, note that
\begin{align}
\nabla_\theta \mathcal L^{\mathrm{AYF\text{-}LMD}
}
=
\nabla_\theta
\mathbb E\left[
w(s,t)(t-s)
f_\theta(x,s,t)^\top
\mathrm{sg}\left(
\partial_t f_\theta(x,s,t)
-v(f_\theta(x,s,t),t)
\right)
\right].
\end{align}
Using the parametrization
\begin{align*}
f_\theta(x,s,t)=x+(t-s)u_\theta(x,s,t),
\end{align*}
we have
\begin{align*}
\nabla_\theta f_\theta
&=(t-s)\nabla_\theta u_\theta,\\
\partial_t f_\theta-v(f_\theta,t)
&=
u_\theta
+(t-s)\partial_tu_\theta
-v\left(x+(t-s)u_\theta,t\right).
\end{align*}
Therefore,
\begin{align}
\nabla_\theta \mathcal L^{\mathrm{AYF\text{-}LMD}}
=
\mathbb E\left[
w(s,t)(t-s)^2
\nabla_\theta u_\theta^\top
\mathrm{sg}\left(
u_\theta
+(t-s)\partial_tu_\theta
-v\left(x+(t-s)u_\theta,t\right)
\right)
\right].
\end{align}
The term inside the \stopgradient is precisely the residual of \Cref{eq:lagrangian_loss}. Hence AYF-LMD and our Lagrangian loss induce the same gradient update, up to their choice of loss weightings.

\clearpage
\section{Experimental details}
\label{sec:exp_details}

\subsection{Additional experiments: validation of theoretical results}
\label{sub:validation_appendix}

\begin{figure}
    \centering
    \includegraphics[width=0.6\linewidth]{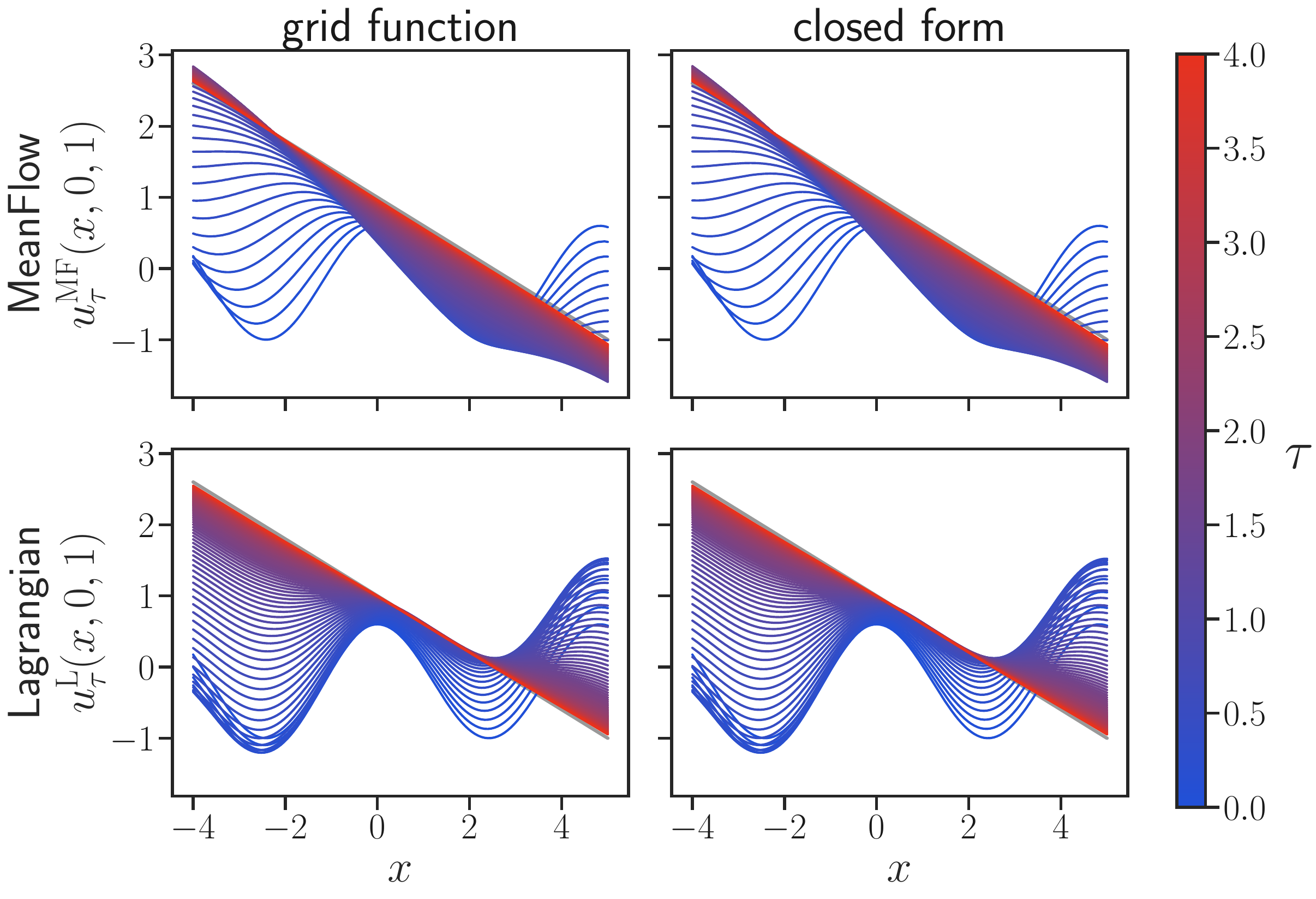}
    \caption{
    \emph{Synthetic Gaussian-to-Gaussian dataset.}
    Each line corresponds to an optimization time $\tau$ for the empirically learned average velocity (left) and the derived closed-form average velocity (right).
    As the optimization time increases, the average velocities converge towards the true one $u^\star$.
    The two average velocities, empirically learned and the closed-form, show strong agreement across optimization times $\tau$ and for both the \meanflow (top) and Lagrangian target (bottom).
    }
    \label{fig_additional_valid_theo}
\end{figure}

All experiments in this subsection were conducted on CPU.

\xhdr{Equivalence of experimental and theoretical trajectories} \Cref{fig:equals_closed_form_1d,fig_additional_valid_theo} show $1 + 2$ dimensional lookup tables with $129 \times 33 \times 33$ parameters trained from a cosine initialization relaxing towards the true flow map. We choose a bounded region of space $x \in [-4, 5]$, and initialize $u_0(x, s, t) = 0.8 \cos (1.3x + 2s) - 0.2t$. Choosing $p_0 = \mathcal{N}(0, 1)$, $p_1 = \mathcal{N}(1, 0.6^2)$, the linear interpolant has $p_t = \mathcal{N}(t, (1 - t)^2 + t^2 0.6^2)$. We take $\rho = 1$, and use learning rate $3 \times 10^{-5}$ for both methods.



\xhdr{Effect of loss weighting on convergence} In order to see how loss weighting affects convergence, we redo the same convergence experiments with $\rho(x, s, t) = (t - s)^{a}$, for various values of $a$. In order for the updates to remain stable at $a = -1$, we use a coarser grid, of size $65 \times 17 \times 17$. Results are plotted in \Cref{fig:different_loss_weightings}. We choose an LR of $10^{-5}$ for $a = -1$, $3 \times 10^{-5}$ for $a = -0.5$, and $3 \times 10^{-4}$ for $a = 0$ and $1$.

We can see that our derived convergence rates hold in practice. Moreover, for choices of $a < 0$ when interior characteristics are predicted to hit the boundary at finite time, convergence to the error floor happens at or shortly after this hitting time.

\begin{figure}[h]
  \centering
  \includegraphics[width=\textwidth]{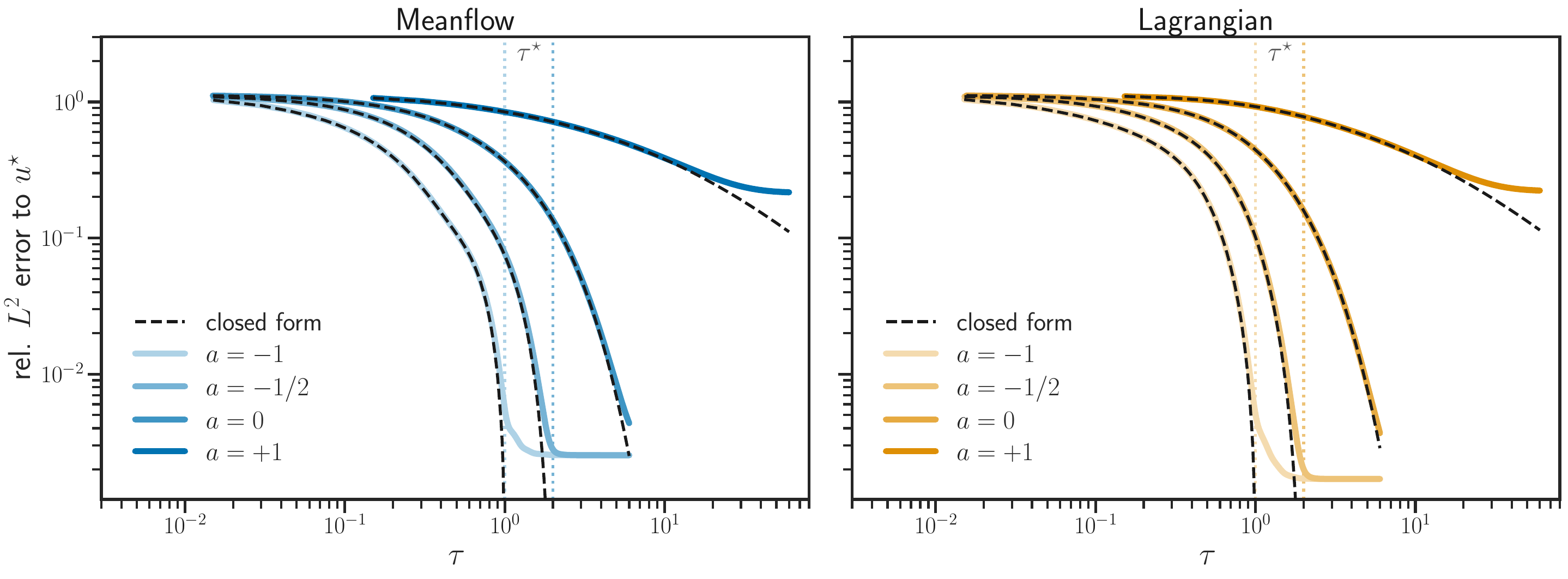}
  \caption{A comparison of the convergence rates of flow maps trained with different loss weightings $\rho = (t - s)^{a}$, versus their theorized trajectories. For $a = -1$ and $a = -0.5$, the theoretical last diagonal hitting time is marked.}
  \label{fig:different_loss_weightings}
\end{figure}

To quantify the dependence of each loss weighting on the diagonal, we consider a situation where the diagonal is not $\tau$-independent. Using the same experimental setup as with \Cref{fig:different_loss_weightings}, we now let $u_\tau(x, t, t) = v(x, t) + \sin (\pi \tau)/2$. 

\Cref{fig:oscillating_diagonal_weighting} shows a comparison of the $L^2$ errors for $a \leq 0$. We see that for $a = -0.5$ and $a = -1$, the loss is consistently higher than the $a = 0$ loss, which matches our prediction that the interior of the PDE tracks the boundary for $a < 0$, whereas for $a = 0$ the solution of the PDE in the interior should be independent of the evolution of the boundary. The failure of the $a = 0$ runs to recover the true interior map is a discretization error that arises from our lookup table approximation.

\begin{figure}[h]
  \centering
  \includegraphics[width=\textwidth]{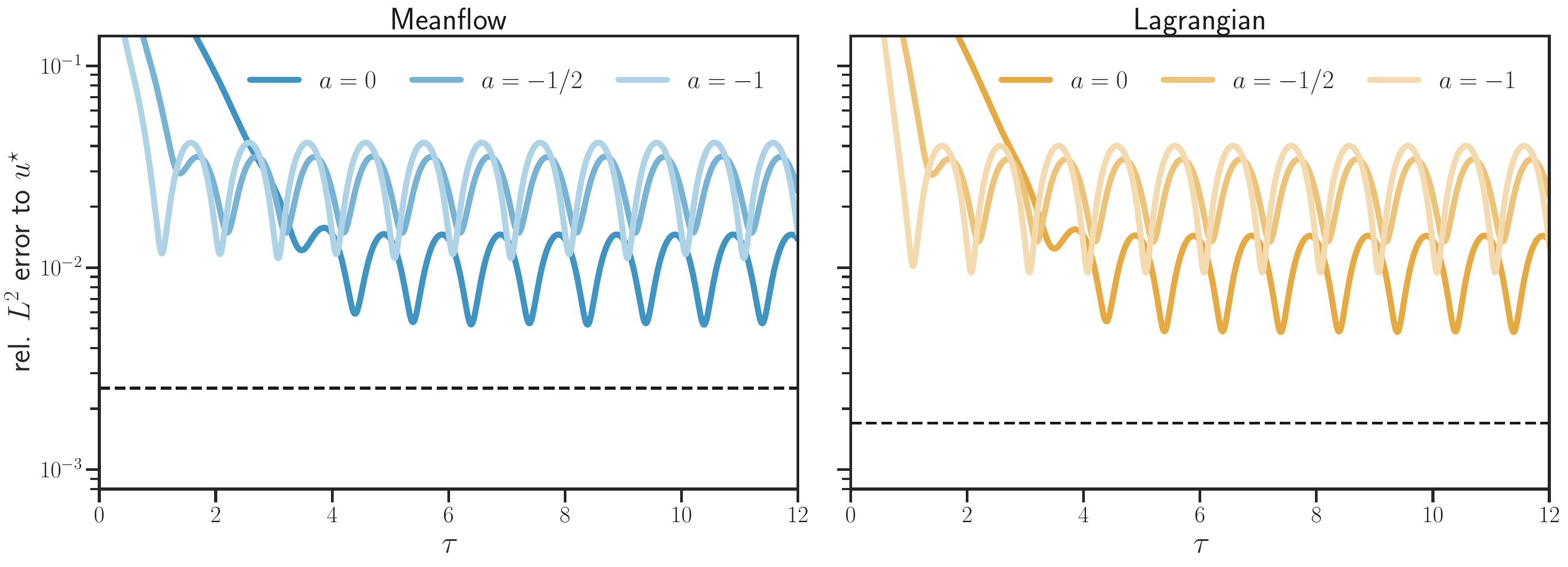}
  \caption{Instead of keeping the diagonal $u(x, s, s)$ fixed, we vary it with optimization time, keeping it centered at the true velocity. The learned flow maps for $a < 0$ have a higher MSE against the true flow map, compared to $a = 0$. We plot the errors achieved when the diagonal is fixed for comparison.}
  \label{fig:oscillating_diagonal_weighting}
\end{figure}





\xhdr{Mode-learning dependence on $\eta$} We now show that learning higher modes is impossible with a fixed learning rate. Using the linear interpolant between $\mathcal{N}(0, 1)$ and $\mathcal{N}(0, 0.6^2)$, we consider initializing our map with error $(t - s)^{k}$, for choices of $k = 2$, $8$ and $24$. We evaluate the largest error of the \meanflow evolution on a $64 \times 257 \times 257$ grid for different values of $\eta$.

Results are found in \Cref{fig:eta_vs_mode}. We see that if $\eta$ is too large compared to $1/k$, the table is unable to learn, and the error grows.

\begin{figure}[h]
  \centering
  \includegraphics[width=\textwidth]{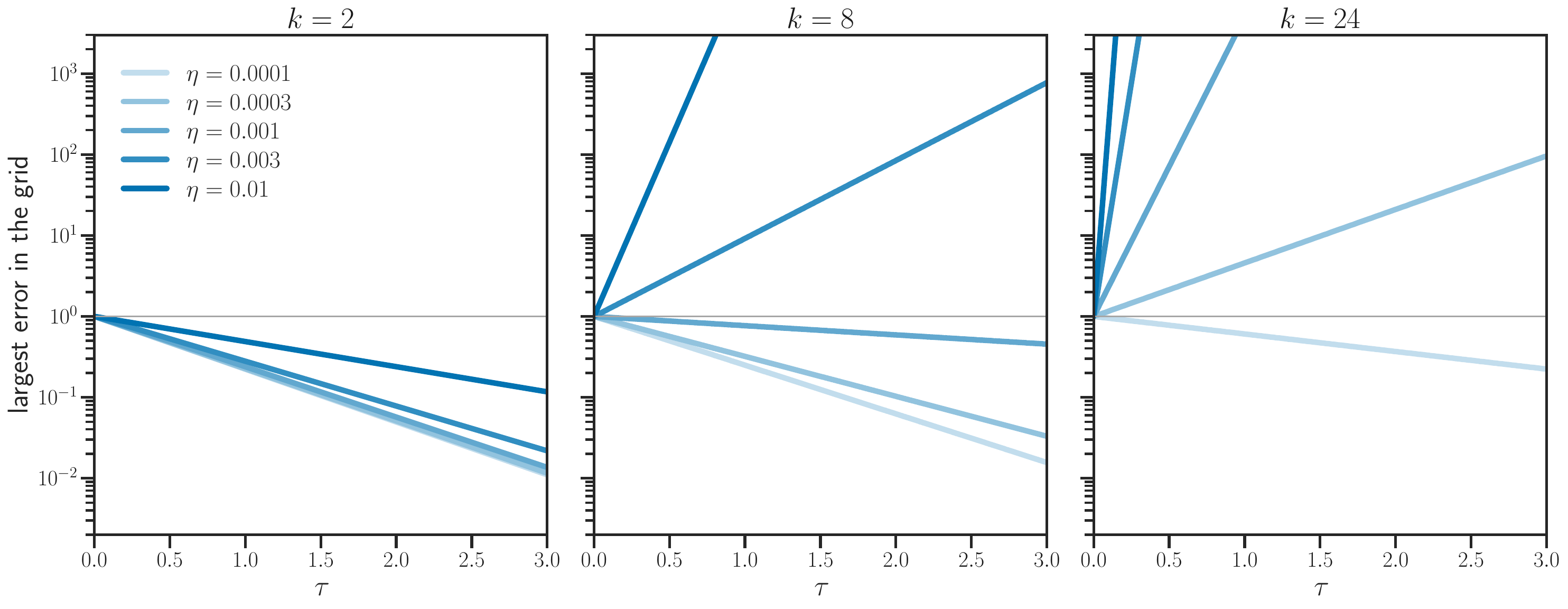}
  \caption{The choice of $\eta$ impacts whether we learn an error of the form $(t - s)^{k}$. As $k$ increases, a smaller $\eta$ must be taken to ensure that gradient descent does not accumulate errors.}
  \label{fig:eta_vs_mode}
\end{figure}

\subsection{Additional experiments: qualitative behavior}




We aim to show a difference in practice between learning using the \meanflow and the Lagrangian objective. The following experiment ran in 90 minutes on a single RTX 5090.

\noindent\textbf{Dependence on velocity Lipschitzness.}
We have shown that pointwise and $L^2$ convergence rates for the \meanflow objective are independent of the Lipschitz constant of the velocity $v$, whereas in the Lagrangian objective the convergence guarantees are exponentially weakened by the Lipschitz constant $L$.
To test whether this effect is visible in practice, we train neural flow map models to follow increasingly expansive velocities using both objectives, and measure their relative $L^2$ error to the true average velocity $\ustar$.

\begin{figure}[tb]
    \centering
    \vspace{-1.5ex}
    \includegraphics[width=.75\linewidth]{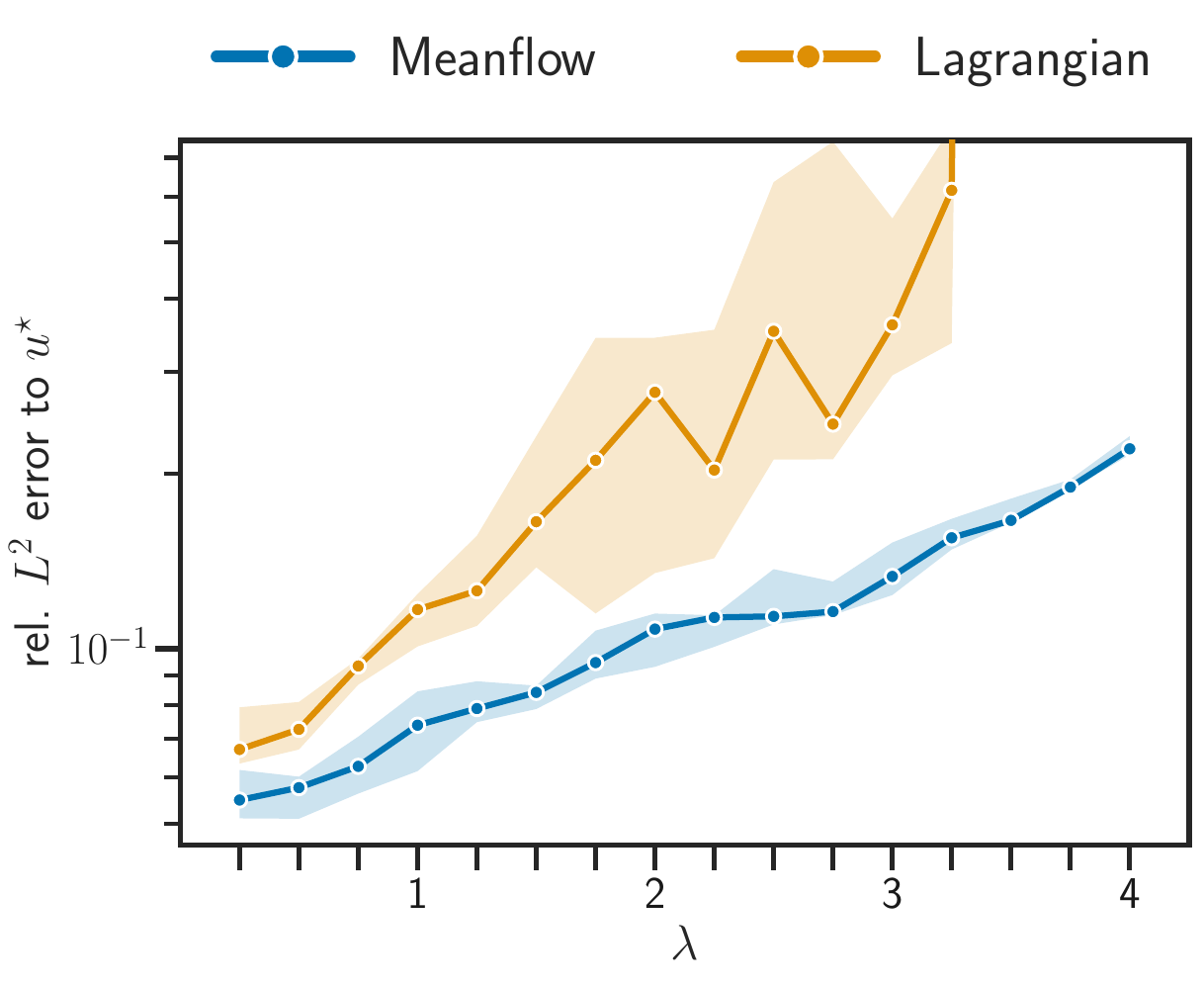}
    \caption{For the linear flow $v(x, t) = \lambda x$, we report the average relative $L^2(p_s)$ error over 5 seeds for average velocity networks against the ground truth $\ustar = (e^{\lambda(t-s)} - 1)x/(t - s)$. Lagrangian is worse than \meanflow for every $\lambda$ tested, and is unstable at high $\lambda$.}
    \label{fig:expansive_mlp_relative_ls}
\end{figure}

We train $3 \times 256$ MLPs with Adam with learning rate $10^{-4}$ to learn the flow map associated to the velocity $v(x, t) = \lambda x$ for values of $\lambda$ in $[0.25, 4]$, starting from the 64 dimensional Gaussian distribution $\mathcal N(0, I)$. We train for 60k steps, using batch sizes of 1024. We plot the mean relative $L^2$ error obtained over 5 seeds along with min-max bands in \Cref{fig:expansive_mlp_relative_ls}.
The \meanflow objective achieves lower error on this task than the Lagrangian objective over all values of $\lambda$ considered, which is consistent with the convergence rate theory that we have developed. For values of $\lambda$ above 3.5, the Lagrangian objective fails to learn the flow map.










To keep the input at bounded variance, we scale the network input by $e^{-\lambda}$, and the network output by $e^{\lambda}$. We compute $L^2$ errors over 30k samples, sampled from $p_s(x)$ for $s \sim U[0,1]$.

\clearpage
\section{Additional experimental details for large scale experiments}
\label{app_expe_large_scale_details}

We include the performance of Lagrangian-trained flow maps on CIFAR-10 in \Cref{tab:fid_p_ablation_ilsd_ffhq}. Similarly to the \meanflow trained flow maps, the lowest FIDs were produced when $p \geq 0.5$, and $p = 0.0$ gave the worst performance.

We provide the hyperparameters used for training in \Cref{tab:training_recipe}. Note that we ablate over the $p$ parameter that decides the strength of the adaptive normalization $w = \mathrm{sg}( (\|u_k-u_{\mathrm{tgt},k}\|_2^2 + c)^{-p})$, with $c = 10^{-3}$. We either use the \meanflow or the Lagrangian loss with the OT-CFM teacher \citep{tong2024improving} for CIFAR-10, or the EDM teacher \citep{EDM} for FFHQ 64$\times$64. Times are measured for the \meanflow loss.
Each entry of \Cref{tab:fid_p_ablation,tab:fid_p_ablation_ilsd_ffhq} displays the mean and the standard deviation over 3 training runs.

Following \citet{gengMeanFlowsOnestep2025}, we use $(s, t - s)$ as the time embedding rather than $(s, t)$, and we pick $(s, t)$ from specialized distributions. For off-diagonal points, we use $s \sim \mathrm{LogitNormal}(0.6, 1.6^2)$ and $t \sim \mathrm{LogitNormal}(4.0, 1.6^2)$, then set $(s, t) \leftarrow (\min(s, t), \max(s, t))$. For diagonal points, $s \sim \mathrm{LogitNormal}(0.6, 1.6^2)$.

\begin{table}[t]
\centering
\caption{
FID on CIFAR-10 for the Lagrangian objective with different normalization exponents $p$.
Results are reported as mean $\pm$ standard deviation over three runs.
}
\label{tab:fid_p_ablation_ilsd_ffhq}
\small
\setlength{\tabcolsep}{8pt}
\renewcommand{\arraystretch}{1.05}
\begin{tabular}{cccc}
\toprule
$p$ & NFE 1 & NFE 2 & NFE 4 \\
\midrule
0.00 & $74.01 \pm 8.71$ & $25.82 \pm 3.51$ & $9.75 \pm 1.16$ \\
0.25 & $56.61 \pm 1.46$ & $21.51 \pm 0.85$ & $8.14 \pm 0.71$ \\
0.50 & $51.99 \pm 8.56$ & $19.24 \pm 1.30$ & $6.80 \pm 0.15$ \\
0.75 & $42.54 \pm 1.87$ & $15.25 \pm 1.06$ & $\mathbf{5.94 \pm 0.24}$ \\
1.00 & $\mathbf{39.58 \pm 5.96}$ & $\mathbf{13.94 \pm 2.17}$ & $6.35 \pm 0.67$ \\
1.25 & $54.33 \pm 7.59$ & $21.00 \pm 3.00$ & $8.06 \pm 1.17$ \\
1.50 & $45.85 \pm 2.94$ & $20.32 \pm 1.93$ & $8.47 \pm 1.05$ \\
\bottomrule
\end{tabular}
\end{table}

\begin{table*}[t]
\centering
\caption{Training configuration for the CIFAR-10 and FFHQ experiments.}
\label{tab:training_recipe}
\small
\renewcommand{\arraystretch}{1.15}
\begin{tabular}{lcc}
\hline
 & \textbf{CIFAR distilled}
 & \textbf{FFHQ distilled} \\
\hline
Teacher
    & CFM (EMA)
    & EDM VE (EMA) \\
    Architecture
    & EDM2-style U-Net
    & EDM2-style U-Net \\
Parameters
    & 54.7M
    & 59.9M \\
Model Channels 
    & 128
    & 128 \\
Channel Mults
    & [2, 2, 2]
    & [1, 2, 2, 2] \\
Attention Dims
    & 16, 8 
    & 16, 8 \\
Training time (1$\times$H100)
    & 13.5h
    & 28.3h \\
Seeds 
    & 0, 1, 2
    & 0, 1, 2 \\
Diagonal Proportion
    & 0.5
    & 0.5 \\
Optimizer
    & Adam
    & Adam \\
Learning rate
    & $2\times10^{-3}$
    & $1\times10^{-3}$ \\
Dropout
    & 0.2
    & 0.2 \\
Batch size
    & 256
    & 256 \\
Training steps
    & 200k
    & 200k \\
Warm-up
    & 2.5k
    & 2.5k \\
EMA
    & 0.9999
    & 0.9999 \\
Precision
    & bf16 AMP
    & bf16 AMP \\
\hline
\end{tabular}
\end{table*}






\end{document}